\newcommand{\Input}{\item[\textbf{Input:}]}
\newcommand{\Output}{\item[\textbf{Output:}]}
\newcommand{\LL}{\mathcal{L}}
\newcommand{\Pow}{\mathcal{P}}
\newcommand{\N}{\mathcal{N}}
\newcommand{\R}{\mathbb{R}}
\newcommand{\eps}{\varepsilon}
\DeclareMathOperator*{\E}{E}
\DeclareMathOperator*{\Var}{Var}
\DeclareMathOperator*{\Cov}{Cov}
\newtheorem{thm}{Theorem}
\newtheorem*{thm*}{Theorem}
\newtheorem{Prop}[thm]{Proposition}
\newtheorem{Lem}[thm]{Lemma}
\theoremstyle{definition}
\newtheorem{example}[thm]{Example}
\newtheorem{Def}[thm]{Definition}
\newenvironment{continuance}[1]
  {\newcommand\continuanceref{\ref{#1}}\continuancex}
  {\endcontinuancex}
\definecolor{interaction-green}{HTML}{64C285}
\definecolor{dependence-purple}{HTML}{946EE6}
\definecolor{cross-pred-purple}{HTML}{C862CC}
\definecolor{loco-red}{HTML}{D98068}
\definecolor{total-yellow}{HTML}{D9CD48}
\definecolor{cov-purple}{HTML}{6285CC}
\definecolor{v1gray}{RGB}{169,169,169}
\definecolor{v2gray}{RGB}{128,128,128}
\definecolor{vCgray}{RGB}{211,211,211}
\newcommand\blfootnote[1]{%
  \begingroup
  \renewcommand\thefootnote{}\footnote{#1}%
  \addtocounter{footnote}{-1}%
  \endgroup
}
\begin{document}

\twocolumn[

\aistatstitle{Disentangling Interactions and Dependencies in Feature Attribution}

\aistatsauthor{Gunnar König* \And  Eric Günther* \And Ulrike von Luxburg}

\aistatsaddress{University of Tübingen\\ Tübingen AI Center \And University of Tübingen\\ Tübingen AI Center \And University of Tübingen\\ Tübingen AI Center} ]

\begin{abstract}
In explainable machine learning, global feature importance methods try to determine how much each individual feature contributes to predicting the target variable, resulting in one importance score for each feature. But often, predicting the target variable requires interactions between several features (such as in the XOR function), and features might have complex statistical dependencies that allow to partially replace one feature with another one. In commonly used feature importance scores these cooperative effects are conflated with the features' individual contributions, making them prone to misinterpretations. 
In this work, we derive DIP, a new mathematical decomposition of individual feature importance scores that disentangles three components: the standalone contribution and the contributions stemming from interactions and dependencies. We prove that the DIP decomposition is unique and show how it can be estimated in practice. Based on these results, we propose a new visualization of feature importance scores that clearly illustrates the different contributions.
\end{abstract}

\section{INTRODUCTION}

\begin{figure}[t] 
    \centering
    \includegraphics[width=0.98\linewidth]{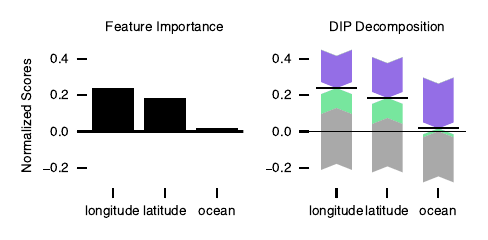}
    \caption{{\bf Feature importance, old vs. new. } Consider a model that predicts house prices using the features 
    \textit{longitude}, \textit{latitude}, and \textit{ocean proximity}. \emph{Left:} Leave-One-Covariate-Out (LOCO) scores. \emph{Right:} Our decomposition of the same scores (\textcolor{black}{black}) into each feature's standalone contribution (\textcolor{v2gray}{gray}) and the contributions of interactions (\textcolor{interaction-green}{green}) and dependencies (\textcolor{dependence-purple}{purple}). The arrows of the bars indicate whether the contribution is positive or negative; their values sum up to the LOCO scores.
}
\label{fig:intro:motivating-example}
\end{figure}

\blfootnote{*Equal contribution. Preprint.}
Tools from explainable AI (xAI) are increasingly employed not only to explain a machine learning (ML) model's mechanism but also to gain insight into the data generating process (DGP) \citep{freiesleben2024scientific}. Global loss-based feature importance techniques are often used to learn about the features' predictive power, that is, their ability to accurately predict the underlying target. To enable such insight, the methods remove features from the dataset, for example, by marginalizing out variables or refitting the model, and quantify the global effect on the empirical risk \citep{covert2021explaining,molnar2022interpretable}. This contrasts methods like SHAP \citep{lundberg2017unified}, which explain how a specific model arrives at its prediction for a particular observation.
\\
Existing feature importance methods try to explain the features' joint predictive power with just one individual score for each feature. This is problematic since, commonly, the predictive power is not simply the sum of the features' standalone contributions but also the result of cooperative forces: Interactions between several variables might unlock additional predictive power, and variable dependencies might render the different standalone contributions redundant. As such, when attributing the predictive power with just one score per feature, the individual scores conflate standalone and cooperative contributions, making them prone to misinterpretation.\\
The main contribution of this paper is to derive a new mathematical decomposition of individual feature importance scores that disentangles the contributions of individual features and cooperative effects stemming from interactions and dependencies.\\
Let us consider an illustrative example. Suppose the goal is to predict the price of a house based on its longitude, latitude, and proximity to the ocean.
Inspecting a typical feature importance plot, the relevance of feature cooperation does not become clear (Figure \ref{fig:intro:motivating-example}, left). Using our method (Figure \ref{fig:intro:motivating-example}, right), we can see that the features longitude and latitude have limited use alone but are highly predictive if combined with the remaining features via an \textcolor{interaction-green}{\textit{interaction}} (since together they determine the exact location). We can also see that the feature ocean proximity is useful alone, but its predictive power can be replaced due to its \textcolor{dependence-purple}{\textit{dependence}} with the remaining variables (longitude and latitude can replace ocean proximity). For details on the implementation we refer to Appendix \ref{appendix:estimation-implementation}.\\
In Section \ref{sec:cooperative-effect}, we show that the impact of interactions and dependencies are entangled in the predictive power. To disentangle them, we first separate pure interactions and main effects \emph{in the predictor} (Section \ref{sec:model-decomposition}). Knowing how to decompose a predictor, we can decompose the \textit{predictive power} of two groups of features as the sum of their respective standalone contributions and the contributions of between-group cooperation via interactions and dependencies (Section \ref{sec:cooperative-effect-decomposition}). The decomposition can explain the outputs of popular feature importance methods such as LOCO or SAGE (Section \ref{sec:decomposing-feature-importance}). We demonstrate its utility on real-world data in Section \ref{section: experiments}.\\
In contrast to existing approaches that explain the relevance of interactions for a prediction function \citep{lundberg2018consistent,sundararajan2020shapley,bordt2023shapley,herbinger2023decomposing}, we focus on learning about the relationships in the data. Thus, we explain the relevance of interactions for the predictive power instead of a specific model's mechanism; we avoid marginal sampling -- crucial to enable insight into the data  \citep{chen2020true,hooker2021unrestricted,freiesleben2024scientific}; and we explain the cooperative contributions of \emph{both} interactions \emph{and dependencies}.
\paragraph{Contributions}{
\begin{itemize}
  \item We propose DIP (\textbf{d}isentangling \textbf{i}nteractions and de\textbf{p}endencies), a unique decomposition of $\LL^2$-loss based predictive power that explains the contributions of both interactions and dependencies between two groups, as well as new plots that clearly visualize their respective contribution (Section \ref{sec:cooperative-effect-decomposition}).
  \item In Section \ref{sec:cooperative-effect}, we show that predictive power conflates interactions and dependencies. To disentangle their contributions, we show how to uniquely separate main effects and so-called pure interactions \textit{in an ML model} in Section \ref{sec:model-decomposition}.
  \item We show that the decomposition can be used to explain popular feature importance techniques (Section \ref{sec:decomposing-feature-importance}) and demonstrate the method's practical usefulness on real-world data (Section \ref{section: experiments}). A \texttt{python} implementation of the method is available on GitHub (\url{https://github.com/gcskoenig/dipd}).
\end{itemize}
}
\section{RELATED WORK}
\label{sec:related-work}
\paragraph{Explanation Techniques that Attribute Interactions}
There is a large amount of literature on explainable AI, we refer to \cite{molnar2022interpretable} for an overview.
Here, we focus on techniques that attribute interactions.
Shapley interaction values \citep{grabisch1999axiomatic, lundberg2018consistent} and higher order variants of them \citep{sundararajan2020shapley,herren2022statistical,bordt2023shapley,hiabu2023unifying} are local attribution methods that attribute interactions. A range of methods for their estimation has been proposed \citep{fumagalli2024kernelshap,fumagalli2024shap,muschalik2024beyond}.
Friedman's H-statistic \citep{friedman2008predictive} or GADGET \citep{herbinger2023decomposing} are global alternatives. In contrast to existing methods, we avoid marginal sampling techniques, which do not allow insight into the data \citep{hooker2021unrestricted,freiesleben2024scientific}. Furthermore, we decompose the predictive power instead of explaining a model's predictions and disentangle the contributions of interactions and dependencies.
\paragraph{Functional Decomposition}
The generalized functional ANOVA (generalized fANOVA) decomposition \citep{hooker2007generalized} decomposes a function into components of different interaction order. Its computation is generally hard \citep{li2012general,lengerich2020purifying}.
The decomposition lays the foundations for generalized Sobol indices \citep{chastaing2015generalized, gao2023probabilistic}. While the aforementioned methods attribute every possible subset of features, we focus on explaining the cooperation between two groups of features. Our decomposition is comparatively easy to estimate and interpretable. Also, we attribute both interactions and dependencies.
\paragraph{Partial Information Decomposition}
When replacing the $\LL^2$-loss with the cross-entropy-loss, the cooperative impact becomes the interaction information (\citealp{covert2020understanding}, Appendix C).
The problem of decomposing the interaction information into a redundancy and a synergy component is called partial information decomposition and is discussed in \cite{williams2010nonnegative,barrett2015exploration,griffith2015quantifying,kolchinsky2022novel}.
\paragraph{Communality Analysis}{
In a similar vein, a range of work has focused on decomposing the explained variance of linear regression models into the standalone and shared contributions of the features \citep{seibold1979commonality,nathans2012interpreting,ray2014using}. %
We generalize existing results to ML models, accounting for nonlinearities and interactions.
}
\section{BACKGROUND}\label{section: background}
\subsection{Notation}
Throughout the paper, we consider $(X,Y)\sim P$ to be our data generating process (DGP), consisting of two random variables: the features $X=(X_1,...,X_d)$ in $\R^d$ and the labels $Y$ in $\R$. They are sampled from some probability measure $P$ on $\R^d\times\R$. We assume $X_1,...,X_d,Y$ as well as every prediction function to be $\LL^2$-measurable with respect to $P$. %
We denote the set of all features by $D:=\{1,...,d\}$ and its power set by $\Pow(D)$. For a set of features $J\subseteq D$, the term $\bar{J}$ refers to the set $D\setminus J$. For sets of just one feature, we tend to drop the brackets for readability, for example, $\bar{j}$ instead of $\overline{\{j\}}$. %
For functions $f:\R^d\to \R$, we often write $\E(f)$ instead of $\E(f(X))$ for better readability, likewise for $\Var$ and $\Cov$. 
By a \textbf{Generalized Additive Model (GAM)} we mean a function $g:\R^d\to \R,\; g(X)=g_1(X_1) +...+ g_d(X_d)$ that can be written as a sum of functions depending on only one feature \citep{hastie1986generalized}. We use the term \textbf{Generalized Groupwise Additive Model (GGAM)} in $X_S$ and $X_T$, where $S,T\subseteq D$, for a function that can be written in the form $g:\R^d\to \R,\; g(X)=g_S(X_S)+g_T(X_T)$. We refer to the component functions of a GAM (GGAM) as \textbf{main effects}. A function that cannot be written as a GAM (GGAM in $X_S$ and $X_T$) is called an \textbf{interaction} (interaction between $X_S$ and $X_T$).  
\subsection{Loss-Based Feature Importance}
By loss-based feature importance we mean methods that quantify the relevance of features by comparing the predictive power of subsets of features \citep{breiman2001random,strobl2008conditional,lei2018distribution,covert2020understanding,konig2021relative,williamson2021nonparametric}.  %
To measure the predictive power of a set of features for a specific model $f$ and loss $L$, we follow the notation of \cite{covert2020understanding} and introduce a \textbf{value function} $v_{f,L}:\Pow(D)\to \R_{\geq 0}$, which measures the drop in risk when knowing $X_S$ compared to having access to none of the features. More formally,
\begin{align*}
  v_{f, L}(S) := E(L(f_\emptyset, Y)) - E(L(f_S(X_S), Y)),
\end{align*}
where $f_S$ is a \textbf{restricted function} that only has access to the features $S \subseteq D$. 
This restricted function can be computed by integrating out the unused features $D \setminus S$ using either the marginal or conditional expectation. Given that our goal is to learn about the DGP, which marginal sampling is unsuitable for \citep{freiesleben2024scientific}, we always use the conditional version
\begin{equation*}
  f_S(x_S) = E(f(X)\mid X_S=x_S).
\end{equation*}
Focusing on regression and on understanding a DGP rather than a particular predictor, we study the $\LL^2$-loss of the optimal predictor $f^*(x)=\E(Y\mid X=x)$ for that DGP. In this setting, we drop the indices of the value function. It can be shown that the respective value function satisfies
\begin{align*}
    v(S):&= v_{f^*, \LL^2}(S) = \Var(Y) - \E\left((Y-f^*_S(X_S))^2\right)\\
    &= \Var(\E(Y\mid X_S)),
\end{align*}
corresponding to the explained variance of $Y$ conditional on $X_S$ (\citealp{covert2020understanding}, Appendix C). We call $v(S)$ the \textbf{predictive power} of the features $S$. %
We denote the normalized version as $\bar{v}(S) := v(S)/\Var(Y)$.\\

In this setting, $f^*_S$, based on conditional expectation, can also be estimated by refitting the model with just $X_S$ \citep{lei2018distribution,williamson2021nonparametric}.  
The Leave-One-Covariate-Out (LOCO) method \citep{lei2018distribution,williamson2021nonparametric} uses refitting to compute $v(D)-v\left(\bar{j}\right)$, which is the drop in predictive power when removing feature $j$ from $D$.

\section{PREDICTIVE POWER DOES NOT REVEAL COOPERATION}
\label{sec:cooperative-effect}
Throughout the paper, we develop a method that explains the relevance of cooperation via interactions and dependencies for the predictive power. In this section, we show that even access to the predictive power of all subsets of  features -- the basis of feature importance methods -- is not sufficient to solve this task.\\
We start the section by introducing what we call the \textit{cooperative impact}, that is, the effect of cooperations on the predictive power. We show that the cooperative impact results from two forces, interactions and dependencies, but that it may not reveal their relevance since their effects on the predictive power might cancel out. Later in the paper we show how to estimate pure interactions in a model (Section \ref{sec:model-decomposition}), which will allow us to disentangle the two cooperative forces in the cooperative impact (Section \ref{sec:cooperative-effect-decomposition}).
\paragraph{The Impact of Cooperations On Predictive Power.}{%
We start by defining the cooperative impact.
\begin{Def}[\textbf{Cooperative Impact}]\label{def:cooperative-impact}
Let $(X,Y)\sim P$ be a DGP on $\R^d\times \R$ and $J\subseteq D$ a subset of features. The cooperative impact $\Psi$ of $J$ and $\bar{J}$ is defined as
$$ \Psi\left(J,\bar{J}\right) := v\left(J \cup \bar{J}\right) - \left(v(J) + v\left(\bar{J}\right)\right).$$
\end{Def}
The cooperative impact results from \textit{two cooperative forces}: interactions and dependencies between the features. As the housing price example in the introduction showed, interactions and dependencies can affect the joint predictive power. They can unlock joint predictive information that is otherwise unavailable or induce redundancies that reduce the joint contribution. However, in their absence, the joint contribution is simply the sum of the features' standalone contributions, and the cooperative impact is zero (Proposition \ref{prop:the-only-coop-effects}, proof in Appendix \ref{app: proof main theorem}).
\begin{Prop}[\textbf{Without Interactions or Dependencies, the Cooperative Effect is Zero}]\label{prop:the-only-coop-effects}
Let $(X,Y)\sim P$ be a DGP and $J\subseteq D$ a subset of features. If $X_J$ and $X_{\bar{J}}$ are independent and the $\LL^2$-optimal predictor is a GGAM $g^*=g^*_J+g^*_{\bar{J}}$ in $X_J$ and $X_{\bar{J}}$, then 
$\Psi\left(J,\bar{J}\right) = 0.$
\end{Prop}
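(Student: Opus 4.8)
The plan is to compute each of the three terms $v(D)$, $v(J)$, $v(\bar J)$ explicitly using the formula $v(S)=\Var(\E(Y\mid X_S))$ together with the two structural assumptions (the GGAM form of $f^*$ and the independence of $X_J$ and $X_{\bar J}$), and then observe that they cancel.

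First I would handle $v(D)$. Since $f^*(x)=\E(Y\mid X=x)$ by definition, $v(D)=\Var(\E(Y\mid X))=\Var(f^*(X))=\Var\!\left(g^*_J(X_J)+g^*_{\bar J}(X_{\bar J})\right)$. Because $X_J$ and $X_{\bar J}$ are independent, so are $g^*_J(X_J)$ and $g^*_{\bar J}(X_{\bar J})$, and the covariance cross term vanishes, giving $v(D)=\Var(g^*_J(X_J))+\Var(g^*_{\bar J}(X_{\bar J}))$. All quantities are finite by the standing $\LL^2$-measurability assumption.

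Next I would handle $v(J)$ (and $v(\bar J)$ symmetrically). By the tower property, $\E(Y\mid X_J)=\E\!\left(\E(Y\mid X)\mid X_J\right)=\E\!\left(g^*_J(X_J)+g^*_{\bar J}(X_{\bar J})\mid X_J\right)=g^*_J(X_J)+\E\!\left(g^*_{\bar J}(X_{\bar J})\mid X_J\right)$. Here independence of $X_J$ and $X_{\bar J}$ is used a second time: it forces $\E\!\left(g^*_{\bar J}(X_{\bar J})\mid X_J\right)=\E\!\left(g^*_{\bar J}(X_{\bar J})\right)$, a constant, which drops out of the variance. Hence $v(J)=\Var(g^*_J(X_J))$, and likewise $v(\bar J)=\Var(g^*_{\bar J}(X_{\bar J}))$. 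Substituting into $\Psi(J,\bar J)=v(D)-v(J)-v(\bar J)$ gives $0$.

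I do not anticipate a genuine obstacle here; the proof is essentially bookkeeping with conditional expectations. The only points that need care are making explicit that both hypotheses are invoked separately — independence once to kill the covariance in $\Var(g^*_J+g^*_{\bar J})$ and once to collapse the conditional expectation of the ``other group's'' main effect to a constant — and noting that without independence neither step goes through, which is precisely what motivates the more refined decomposition developed later in the paper.
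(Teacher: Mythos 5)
Your argument is correct, and it takes a more elementary route than the paper. The paper proves Proposition \ref{prop:the-only-coop-effects} in one line as a special case of Theorem \ref{thm: decomposition for groups}: under the hypotheses the pure interaction satisfies $h^*=0$ (the optimal predictor is itself a GGAM, hence equals its own best GGAM approximation), and independence of $X_J$ and $X_{\bar{J}}$ makes the cross-predictability terms $\Var\left(\E\left(g^*_J\mid X_{\bar{J}}\right)\right)$, $\Var\left(\E\left(g^*_{\bar{J}}\mid X_J\right)\right)$ and the covariance $2\Cov\left(g^*_J,g^*_{\bar{J}}\right)$ vanish. You instead compute $v(D)$, $v(J)$, $v\left(\bar{J}\right)$ directly; your tower-property step $\E(Y\mid X_J)=g^*_J+\E\left(g^*_{\bar{J}}\mid X_J\right)$ is exactly the computation inside the paper's proof of Theorem \ref{thm: decomposition for groups}, but in your special case independence collapses $\E\left(g^*_{\bar{J}}\mid X_J\right)$ to a constant and there is no interaction term to control, so you need none of the pure-interaction machinery (Theorem \ref{Thm: GAM residual gives pure interaction}, Lemmata \ref{app: Lem: equivalence orthogonality and cond expectation} and \ref{app: Lem: tower property}, uniqueness of the GGAM components). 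The paper's route gets the proposition for free once the general decomposition is established and exhibits it as the degenerate case of DIP; your route is self-contained and makes transparent where each hypothesis enters. One minor point worth stating explicitly: the standing $\LL^2$ assumption applies to $f^*=g^*_J+g^*_{\bar{J}}$ as a whole, not to the components individually; under independence their individual square-integrability (hence the finiteness of the variances and the covariance you manipulate) follows via Fubini's theorem, so your bookkeeping is justified.
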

\paragraph{The Cooperative Impact May Not Reveal the Importance of Cooperations.}{Although the cooperative impact is zero if no interactions or dependencies are present, the converse is not true. %
We illustrate this issue with an example.
\begin{example}[\textbf{Contributions of Interactions and Dependencies Cancel Out}]\label{example:cooperative-forces-cancel-out}
Let $Y=X_1 + X_2 + cX_1X_2$, where $X$ is normally distributed on $\R^2$ with $\Var(X_1) = \Var(X_2) = 1$ and $\Cov(X_1, X_2) = \beta$.\\
In DGP 1, we set $c=\beta=0$, meaning there are no interactions or dependencies, and thus no cooperations. In DGP 2, we set $c=\sqrt{6}$ and $ \beta = 0.5$ such that there is cooperation both in the form of interactions and dependencies. In both cases we obtain $\bar{v}(1\cup 2) = 1$, $\bar{v}(1)=0.5$, and $\bar{v}(2)=0.5$.
Hence $\Psi(1,2) = 0$. See Appendix \ref{app:example:linear} for the formal derivation.
\end{example}
In the example, there is no cooperation in the first DGP, but the variables cooperate both in form of interactions and dependencies in the second DGP. 
Nevertheless, the value functions for all possible sets of features are the same in both DGPs; the cooperative impact is zero. The reason is that in the second DGP the positive effect of the interaction and the negative effect of the redundancy-inducing dependence cancel each other out.\\
The example shows that value functions are not sufficient to quantify the relevance of interactions and dependencies. To reveal their impact, we first estimate pure interactions in a model.
}

\section{ESTIMATING PURE INTERACTIONS}
\label{sec:model-decomposition}
In this section, our goal is to separate interactions and main effects in a model $f$. Given two groups of variables, we decompose $f$ into an \textit{interaction term} that only represents interactions \textit{between} the groups, and \textit{main effects} that only permit interactions \textit{within} groups.

\paragraph{Characterizing Pure Interactions Using Additive Models}{
More formally, given two groups of features $J$ and $\bar{J}$, we want to decompose a prediction function $f$ as
$$f(x) = g_J(x_J) + g_{\bar{J}}(x_{\bar{J}}) + h(x),$$
where $g$ is the model that only permits within-group interactions, that is, a generalized groupwise additive model (GGAM) of the form $g = g_J + g_{\bar{J}}$. %
Moreover, we want the remaining interaction term $h$ to be ``pure'', meaning that everything that can be expressed without between-group interactions should be represented in the GGAM $g$.
This intuition gives rise to the following definition.
\begin{Def}[\textbf{Pure Interaction}]
\label{def:pure-interaction}
    Let $f:\R^d\to\R$ be a %
    function on some probability space $\left(\R^d, P\right)$ and $J\subseteq D$ a subset of features. Let further $g$ be the $\LL^2$-optimal approximation of $f$ within all GGAMs in $X_J$ and $X_{\bar{J}}$. %
    We define the \emph{pure interaction} of $f$ with respect to $X_J$ and $X_{\bar{J}}$ as $f-g$. 
\end{Def}
In short, the pure interaction represents what cannot be explained by a GGAM. This characterization of pure interactions between groups is not only intuitive; in the following paragraphs, we show that it is also unique, entails a simple estimation procedure, and overlaps with existing definitions. 
As we will see in Section \ref{sec:cooperative-effect-decomposition}, its properties allow finding an interpretable decomposition of the predictive power.
\paragraph{Uniqueness of the GGAM Components}
We show that under mild assumptions, the component functions $g_J$ and $g_{\bar{J}}$ of the GGAM $g$ in Definition \ref{def:pure-interaction} are unique, see Appendix \ref{app: uniqueness GAM}. This ensures that our decomposition $f=g_J + g_{\bar{J}} + h$ is unique as well, which will later imply the uniqueness of the DIP decomposition.
\paragraph{Estimation}{
Our definition of pure interactions directly entails a procedure for their estimation. To find the pure interaction in $f$ with respect to two groups $J$ and $\bar{J}$, we only need to fit one GGAM $g$ approximating $f$ to get the pure interaction as the residual $h = f - g$. Thereby, we can leverage a broad range of methods and implementations \citep{hastie1986generalized,serven2018pygam,nori2019interpretml}.\\
In our context, we want to decompose $f^*$, and therefore approximate $f^*$ with a GGAM. 
As Lemma \ref{app: Lem: tower property} in Appendix \ref{app: lemma subsection} shows, we can equivalently approximate $Y$ using the GGAM. Since only an approximation of $f^*$ is available in practice, we fit the GGAM on $Y$ in our experiments.
}
\paragraph{Properties and Relation To Existing Definitions} We can equivalently characterize pure interactions as terms that are not approximable by only one of the two groups of variables, as the following theorem shows (proof in Appendix \ref{app: proof equivalence pure interaction}).

\begin{thm}[\textbf{Equivalent Characterization of Pure Interactions}]\label{Thm: GAM residual gives pure interaction}
    Let $f:\R^d\to\R$ be a %
    function on some probability space $\left(\R^d, P\right)$ and $J\subseteq D$ a subset of features. A GGAM $g=g_J+g_{\bar{J}}$ is the $\LL^2$-optimal approximation of $f$ within all GGAMs in $X_J$ and $X_{\bar{J}}$ if and only if the residual $h:=f-g$ satisfies $\E(h\mid X_J)=0$ and $\E(h\mid X_{\bar{J}})=0.$
\end{thm}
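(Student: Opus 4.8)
The plan is to read the statement as the first-order optimality (orthogonality) characterization of an $\LL^2$-projection. Let $\mathcal{G}$ denote the linear subspace of $\LL^2(P)$ consisting of all GGAMs $g_J(X_J)+g_{\bar J}(X_{\bar J})$ with $\LL^2$ components. Then $g$ being the $\LL^2$-optimal approximation of $f$ within $\mathcal{G}$ means $g$ minimizes $\phi\mapsto\E\big((f-\phi)^2\big)$ over $\phi\in\mathcal{G}$, and the theorem asserts that this happens exactly when the residual $h=f-g$ is $\LL^2$-orthogonal to every element of $\mathcal{G}$, which in turn is equivalent to $\E(h\mid X_J)=0$ and $\E(h\mid X_{\bar J})=0$. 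So I would prove the two implications separately, the crux being to translate orthogonality against the two ``coordinate'' families $\{\phi_J(X_J)\}$ and $\{\phi_{\bar J}(X_{\bar J})\}$ into the two conditional-expectation conditions.

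First I would show that the conditions imply optimality (this direction needs no assumption that a minimizer exists). Assume $\E(h\mid X_J)=0$ and $\E(h\mid X_{\bar J})=0$, and let $g'=g'_J+g'_{\bar J}\in\mathcal{G}$ be arbitrary. Writing $\delta:=g-g'=\delta_J(X_J)+\delta_{\bar J}(X_{\bar J})$, expand $\E\big((f-g')^2\big)=\E\big((h+\delta)^2\big)=\E(h^2)+2\,\E(h\delta)+\E(\delta^2)$. The cross term vanishes by the tower property:
\begin{align*}
\E(h\delta) &= \E\big(h\,\delta_J(X_J)\big)+\E\big(h\,\delta_{\bar J}(X_{\bar J})\big)\\
&= \E\big(\E(h\mid X_J)\,\delta_J(X_J)\big)+\E\big(\E(h\mid X_{\bar J})\,\delta_{\bar J}(X_{\bar J})\big)=0,
\end{align*}
where Cauchy--Schwarz guarantees the products are integrable since $h,\delta_J(X_J),\delta_{\bar J}(X_{\bar J})\in\LL^2$. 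Hence $\E\big((f-g')^2\big)=\E(h^2)+\E(\delta^2)\ge\E(h^2)=\E\big((f-g)^2\big)$, so $g$ is optimal.

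Conversely, I would derive the conditions from optimality by a one-parameter perturbation. Fix any measurable $\phi_J$ with $\phi_J(X_J)\in\LL^2$; then $g+t\,\phi_J(X_J)\in\mathcal{G}$ for all $t\in\R$, and $t\mapsto\E\big((f-g-t\phi_J(X_J))^2\big)=\E(h^2)-2t\,\E\big(h\phi_J(X_J)\big)+t^2\,\E\big(\phi_J(X_J)^2\big)$ is a parabola minimized at $t=0$, which forces $\E\big(h\phi_J(X_J)\big)=0$. Taking $\phi_J=\ind_{\{X_J\in A\}}$ over measurable sets $A$ (these are bounded, hence admissible perturbations), the defining property of conditional expectation gives $\E(h\mid X_J)=0$; the symmetric argument with $X_{\bar J}$ yields $\E(h\mid X_{\bar J})=0$. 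Combining the two implications proves the equivalence.

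I expect the only genuinely delicate points — beyond the routine projection bookkeeping — to be (i) checking that the cross terms above are integrable so that the tower property may be applied, which is handled by Cauchy--Schwarz and the standing $\LL^2$ assumptions on the GGAM components, and (ii) ensuring the family of test perturbations in the converse direction is rich enough to pin down $\E(h\mid X_J)$, for which the indicators of $\{X_J\in A\}$ suffice.
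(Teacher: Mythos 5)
Your proposal is correct, and it reaches the same structural conclusion as the paper -- optimality in the GGAM class is equivalent to orthogonality of the residual against functions of $X_J$ alone and of $X_{\bar J}$ alone, which in turn is equivalent to the two vanishing conditional expectations -- but you get there by a more self-contained, elementary route. The paper simply cites the Hilbert-space projection characterization (optimal iff the residual is perpendicular to the subspace, via Luenberger) and then invokes its Lemma~\ref{app: Lem: equivalence orthogonality and cond expectation}, whose proof tests orthogonality against $\E(h\mid X_J)$ itself and uses the law of total covariance. You instead prove both ingredients directly: sufficiency by expanding $\E\bigl((h+\delta)^2\bigr)$ and killing the cross term with the tower property (correctly noting the Cauchy--Schwarz integrability check), and necessity by the one-parameter perturbation $g+t\,\phi_J(X_J)$ together with indicator test functions $\ind_{\{X_J\in A\}}$ and the defining property of conditional expectation. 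What the paper's route buys is brevity and a reusable lemma (the same orthogonality-vs-conditional-expectation equivalence is used again in the proof of Theorem~\ref{thm: decomposition for groups} to make covariance terms vanish); what your route buys is that the sufficiency direction visibly requires no appeal to an abstract projection theorem and makes explicit that it does not presuppose existence of a minimizer. One small bookkeeping point: you restrict $\mathcal{G}$ to GGAMs with $\LL^2$ components so that $\delta_J(X_J),\delta_{\bar J}(X_{\bar J})\in\LL^2$; this matches the paper's implicit convention (its argument likewise treats the GGAM class as the sum of the two $\LL^2$ subspaces), so it is a fair reading rather than a gap.
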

Theorem \ref{Thm: GAM residual gives pure interaction} enables another interpretation of pure interactions. In particular, $\E(h\mid X_J)=\E(h\mid X_{\bar{J}})=0$ implies that pure interactions in the optimal predictor $f^*$ for a DGP do not contribute to the standalone predictive powers $v(J)$ and $v\left(\bar{J}\right)$.\\
Furthermore, Theorem \ref{Thm: GAM residual gives pure interaction} implies that in the two-dimensional setting, our definition of pure interactions coincides with the one in \cite{lengerich2020purifying}, which is based on \cite{hooker2007generalized}.\\
We highlight that pure interactions depend not only on the function $f$ but also on the dependencies between features; see Appendix \ref{app:example:linear} for an example.

\section{DISENTANGLING THE CONTRI- BUTIONS OF INTERACTIONS AND DEPENDENCIES}\label{sec:cooperative-effect-decomposition}
Equipped with the tools to decompose a model $f$, we are ready to \textbf{d}isentangle the effects of \textbf{i}nteractions and de\textbf{p}endencies on the predictive power (DIP decomposition). More precisely, we decompose the cooperative impact (see Definition \ref{def:cooperative-impact}) in Theorem \ref{thm: decomposition for groups}. For the proof we refer to Appendix \ref{app: proof main theorem}.
\begin{thm}[\textbf{Cooperative Impact Decomposition}]\label{thm: decomposition for groups}
    Let $(X,Y)\sim P$ be a DGP on $\R^d\times\R$, $J\subseteq D$ a subset of features, $f^*=\E(Y\mid X)$ the $\LL^2$-optimal predictor and $g^*=g^*_J+g^*_{\bar{J}}$ the $\LL^2$-optimal GGAM in  $X_J$ and $X_{\bar{J}}$. We call $h^*:=f^*-g^*$. Then, we get a decomposition

 \begin{align*}
        \Psi\left(J, \bar{J}\right) 
        =&\textcolor{interaction-green}{\underbrace{\Var(h^*)}_{\shortstack{\scriptsize \text{Interaction} \\ \scriptsize \text{Surplus}}}} - \textcolor{dependence-purple}{\underbrace{\mathrm{Dep}\left(J,\bar{J}\right)}_{\shortstack{\scriptsize \text{Main Effect} \\ \scriptsize \text{Dependencies}}}}, \quad \text{where}\\
        \textcolor{dependence-purple}{\mathrm{Dep}\left(J,\bar{J}\right)}  
        :=& \textcolor{cross-pred-purple}{
                   \underbrace{\Var\left (\E\left (g^*_J\mid X_{\bar{J}}\right )\right ) + \Var\left (\E\left (g^*_{\bar{J}}\mid X_J\right )\right) 
                   }_{\text{Cross-Predictability}}} \\
        &+ \textcolor{cov-purple}{
                   \underbrace{2\Cov\left (g^*_J, g^*_{\bar{J}}\right )
                   }_{\text{Covariance}}}.
    \end{align*}
\end{thm}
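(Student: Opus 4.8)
The plan is to reduce everything to the decomposition $f^* = g^*_J + g^*_{\bar{J}} + h^*$ and then expand the relevant variances and covariances bilinearly, using the tower property together with the orthogonality relations $\E(h^*\mid X_J)=\E(h^*\mid X_{\bar{J}})=0$ furnished by Theorem \ref{Thm: GAM residual gives pure interaction}. First I would record two elementary reductions. Since $v(S)=\Var(\E(Y\mid X_S))=\Var(\E(f^*\mid X_S))$ by the tower property, we have $v(J\cup\bar{J})=\Var(f^*)$, so $\Psi(J,\bar{J})=\Var(f^*)-v(J)-v(\bar{J})$. Moreover, $\Var$ and $\Cov$ are invariant under adding constants to their arguments, so we may assume without loss of generality that $g^*_J$, $g^*_{\bar{J}}$, and $h^*$ are centered; this only changes $f^*$ by a constant and hence affects none of the quantities in the statement.

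Next I would compute $\Var(f^*)$. Expanding $\Var(g^*_J+g^*_{\bar{J}}+h^*)$ yields $\Var(g^*_J)+\Var(g^*_{\bar{J}})+\Var(h^*)+2\Cov(g^*_J,g^*_{\bar{J}})+2\Cov(g^*_J,h^*)+2\Cov(g^*_{\bar{J}},h^*)$. The crucial point is that the last two covariances vanish: since $h^*$ is centered, $\Cov(g^*_J,h^*)=\E(g^*_J h^*)=\E\bigl(g^*_J\,\E(h^*\mid X_J)\bigr)=0$ by Theorem \ref{Thm: GAM residual gives pure interaction}, and symmetrically for $\bar{J}$. Hence $\Var(f^*)=\Var(g^*_J)+\Var(g^*_{\bar{J}})+\Var(h^*)+2\Cov(g^*_J,g^*_{\bar{J}})$.

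Then I would compute the standalone powers. Using $\E(f^*\mid X_J)=g^*_J+\E(g^*_{\bar{J}}\mid X_J)+\E(h^*\mid X_J)=g^*_J+\E(g^*_{\bar{J}}\mid X_J)$, expanding the variance gives $v(J)=\Var(g^*_J)+\Var\bigl(\E(g^*_{\bar{J}}\mid X_J)\bigr)+2\Cov\bigl(g^*_J,\E(g^*_{\bar{J}}\mid X_J)\bigr)$; since $g^*_J$ is $X_J$-measurable, the last covariance equals $\E\bigl(g^*_J\,\E(g^*_{\bar{J}}\mid X_J)\bigr)=\E(g^*_J g^*_{\bar{J}})=\Cov(g^*_J,g^*_{\bar{J}})$ by another use of the tower property. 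The analogous computation gives $v(\bar{J})=\Var(g^*_{\bar{J}})+\Var\bigl(\E(g^*_J\mid X_{\bar{J}})\bigr)+2\Cov(g^*_J,g^*_{\bar{J}})$.

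Finally, substituting these three expansions into $\Psi(J,\bar{J})=\Var(f^*)-v(J)-v(\bar{J})$: the terms $\Var(g^*_J)$ and $\Var(g^*_{\bar{J}})$ cancel, and the three copies of $2\Cov(g^*_J,g^*_{\bar{J}})$ combine to $+2-2-2 = -2$ copies, leaving $\Psi(J,\bar{J})=\Var(h^*)-\bigl[\Var(\E(g^*_J\mid X_{\bar{J}}))+\Var(\E(g^*_{\bar{J}}\mid X_J))+2\Cov(g^*_J,g^*_{\bar{J}})\bigr]$, which is precisely the asserted identity with $\mathrm{Dep}(J,\bar{J})$ as defined. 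I expect the only genuine subtlety to be the bookkeeping around constants — justifying the centering reduction and checking it leaves every $\Var$ and $\Cov$ term unchanged — and invoking the orthogonality relations of Theorem \ref{Thm: GAM residual gives pure interaction} in exactly the right conditional form; the remaining steps are routine bilinear expansion and cancellation.
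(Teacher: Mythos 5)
Your proposal is correct and follows essentially the same route as the paper's proof: expand $\Var(f^*)=\Var(g^*_J+g^*_{\bar{J}}+h^*)$, kill the cross terms with $h^*$ via the orthogonality relations $\E(h^*\mid X_J)=\E(h^*\mid X_{\bar{J}})=0$, write $\E(f^*\mid X_J)=g^*_J+\E(g^*_{\bar{J}}\mid X_J)$ (and symmetrically) to expand $v(J)$ and $v\left(\bar{J}\right)$, reduce $\Cov\left(g^*_J,\E(g^*_{\bar{J}}\mid X_J)\right)$ to $\Cov\left(g^*_J,g^*_{\bar{J}}\right)$, and combine. The only cosmetic differences are your WLOG centering and the use of measurability plus the tower property where the paper invokes the law of total covariance; the paper also makes explicit (via its Lemma \ref{app: Lem: tower property}) that the optimal GGAM for $Y$ is the optimal GGAM approximation of $f^*$, a step you use implicitly when citing Theorem \ref{Thm: GAM residual gives pure interaction}.
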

\begin{figure*}[ht]
\centering
    \begin{subfigure}[t]{0.2\textwidth}
        \centering
        \includegraphics[width=0.99\linewidth]{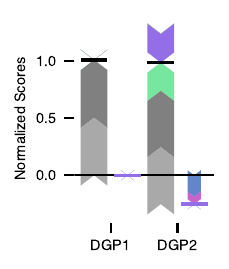}
        \caption{Example \ref{example:cooperative-forces-cancel-out}}
        \label{fig:forceplot-coop-forces-cancel-out}
    \end{subfigure}
    \hfill
    \begin{subfigure}[t]{0.25\textwidth}
    \centering
    \includegraphics[width=0.44\textwidth]{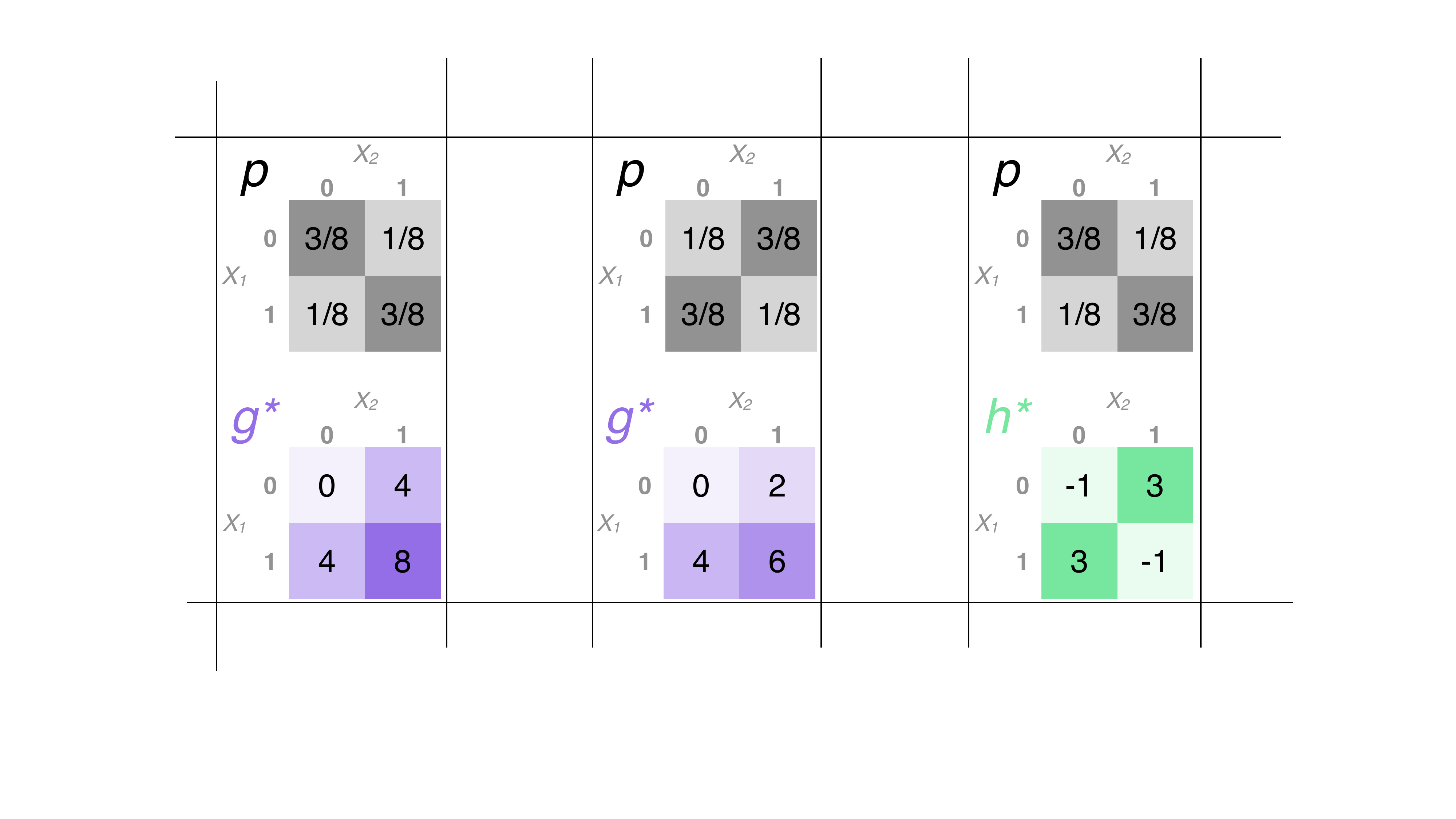}
    \hfill
    \includegraphics[width=0.5\textwidth]{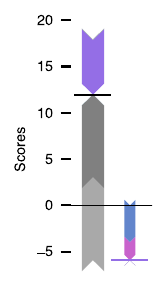}
    \caption{Example \ref{Example: student redundancy}}
    \label{fig: student redundancy}
    \end{subfigure}
    \begin{subfigure}[t]{0.25\textwidth}
    \centering
    \includegraphics[width=0.44\textwidth]{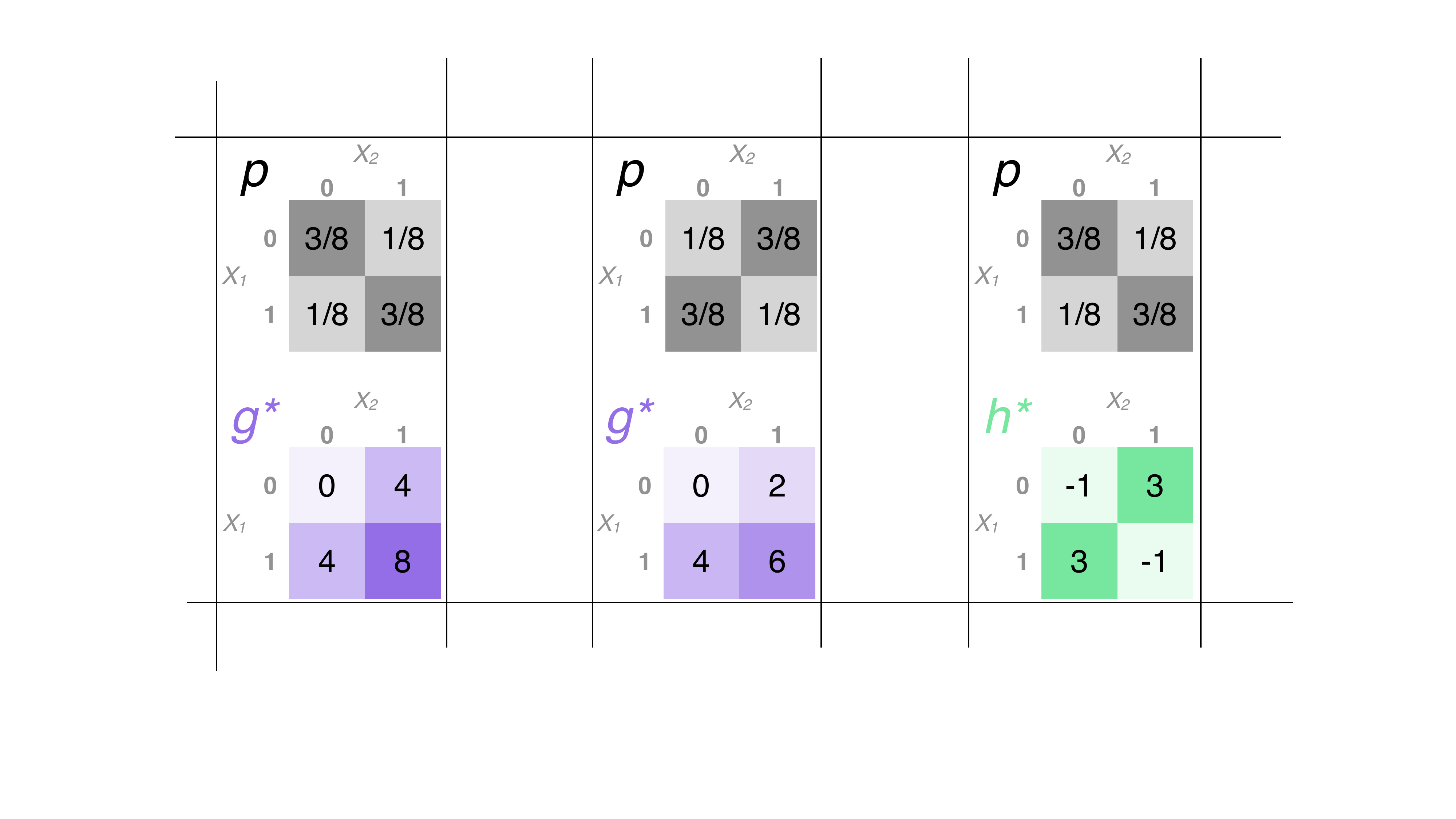}
    \includegraphics[width=0.5\textwidth]{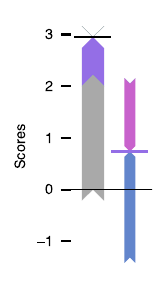}
    \caption{Example \ref{example: student enhancement}}
    \label{fig: student enhancement}
    \end{subfigure}
    \begin{subfigure}[t]{0.25\textwidth}
    \centering
    \includegraphics[width=0.44\textwidth]{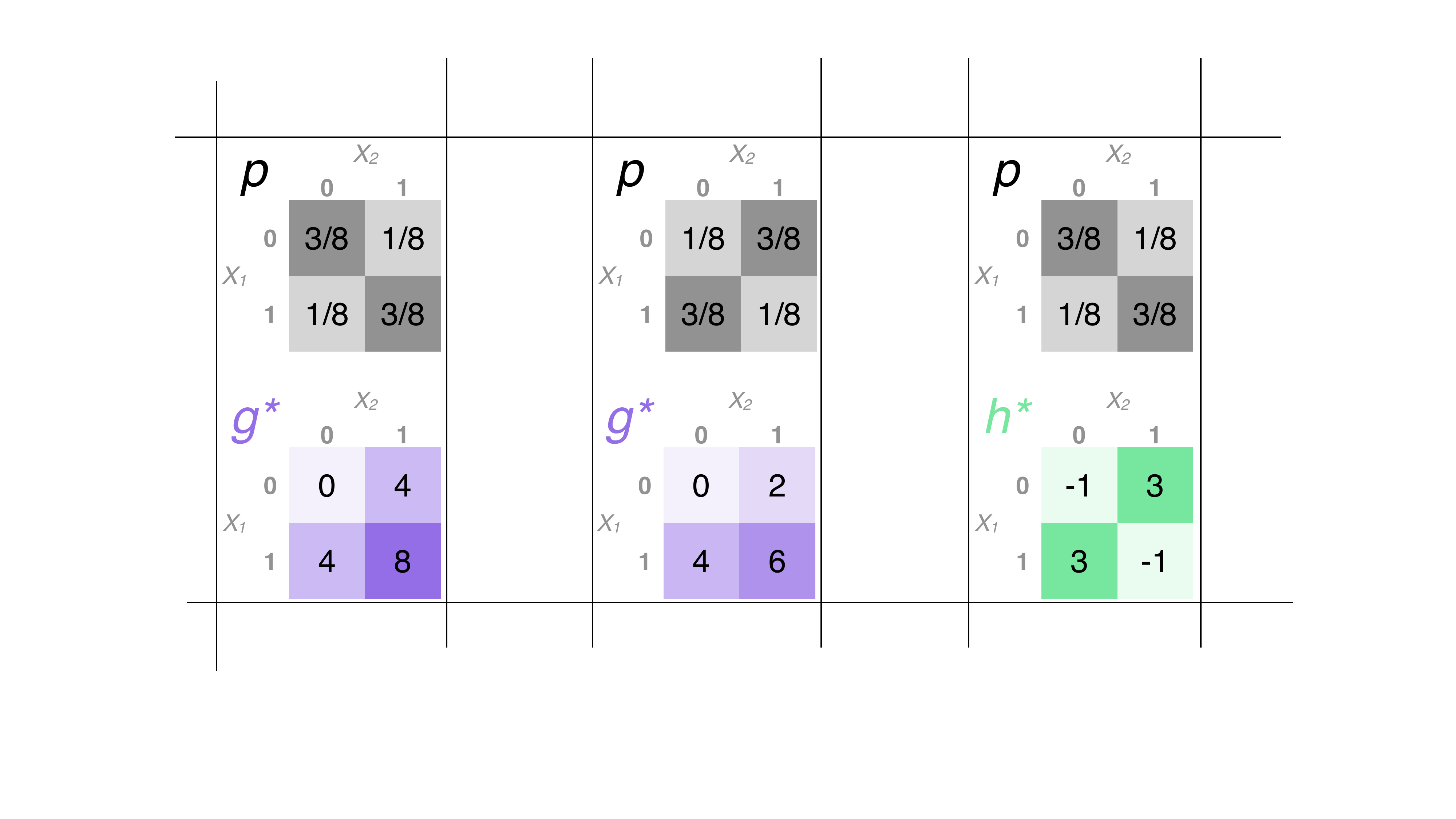}
    \includegraphics[width=0.5\textwidth]{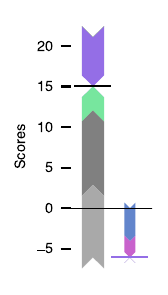}
    \caption{Example \ref{example: binary interaction}}
    \label{fig: binary interaction}
    \end{subfigure}

\caption{\textbf{Examples.} For each example, we show a forceplot visualizing the DIP decomposition into standalone contributions \textcolor{v1gray}{($v(1)$} and \textcolor{v2gray}{$v(2)$}), \textcolor{dependence-purple}{main effect dependencies ($\mathrm{Dep}(1,2)$)} and \textcolor{interaction-green}{interaction suplus}, where the direction of each bar (upward or downward) represents the sign. They sum up to $v(1,2)$ (black horizontal line). The slim bars (right) show the decomposition of \textcolor{dependence-purple}{$\mathrm{Dep}(1,2)$} (purple horizontal line) into \textcolor{cov-purple}{covariance} and \textcolor{cross-pred-purple}{cross-predictability}. For Examples \ref{Example: student redundancy}-\ref{example: binary interaction}, we additionally show heatmaps visualizing the distribution (top) and $g^*$ or $h^*$ (bottom).}
\end{figure*}
\paragraph{Interpretation}
First, we recall that both pure interactions and main effects are unique (Section \ref{sec:model-decomposition}, Appendix \ref{app: uniqueness GAM}), and thus the decomposition is, too.\\
Next, we highlight that the cooperative impact decomposes into a term that only depends on the pure interaction $h^*$ %
 and terms that only depend on the main effects $g^*$%
. In other words, the contributions of interactions and main effects simply add up. This is no coincidence but a direct consequence of the properties of pure interactions (Definition \ref{def:pure-interaction}). Roughly, the properties ensure that any covariance terms involving $h$ and $g$ vanish (cf. the proof of Theorem \ref{thm: decomposition for groups}).\\
The effect of interactions on the predictive power is given by the variance of the pure interaction term $h^*$. Since the variance cannot be negative, we refer to it as the \textcolor{interaction-green}{\textbf{interaction surplus}}. It measures how much of the joint predictive power can only be explained with between-group interactions.\\
We notice that $\textcolor{dependence-purple}{\mathrm{Dep}\left(J, \bar{J}\right)}$ vanishes if the two groups of features $X_J$ and $X_{\bar{J}}$ are independent. It measures how much of the cooperative impact is caused by \emph{dependencies}. More precisely, by the dependencies between the effects of $X_J$ and $X_{\bar{J}}$ on $Y$, given by the main effects $g^*_J$ and $g^*_{\bar{J}}$. Thus we refer to $\mathrm{Dep}\left(J, \bar{J}\right)$ as \textcolor{dependence-purple}{\textbf{main effect dependencies}}. As we will see, the main effect dependencies can have a positive or negative influence on the cooperative impact. The main effect dependencies consist of two parts: We refer to them as the main effect cross-predictability and the main effect covariance.\\ 
Intuitively, the \textcolor{cross-pred-purple}{\textbf{main effect cross-predictability}} quantifies how redundant the contributions of the two groups of features are. More formally, it measures how much of the variance of each main effect could also be explained by the respective other group of variables. Being a sum of variances, the cross-predictability is always positive, and its impact on the cooperative impact is always negative. This is consistent with the intuition that the joint predictive power should decrease if the variables share more variation.\\
Unlike the cross-predictability, the \textcolor{cov-purple}{\textbf{main effect covariance}} can be either positive or negative. If it is negative and its absolute value outweighs the cross-predictability, it increases the cooperative impact $\Psi\left(J,\bar{J}\right)$. This may seem counter-intuitive because then, the two groups of variables are more predictive together than individually, even if they do not interact. For an intuitive example on how a negative main effect covariance induces this improvement in predictive power, we refer to Example \ref{example: student enhancement} below. This phenomenon can also be observed in multivariate linear models and is tied to the word ``enhancement'' \citep{friedman2005graphical} or ``suppression'' \citep{shieh2006suppression}. We extend the analysis of this phenomenon to the more general setting of ML involving GAMs.\\
 Note that $\mathrm{Dep}\left(J,\bar{J}\right)$ and its two components cannot simply be determined from the dependencies between features. Instead they explain the relevance of dependencies \textit{for the predictive power} (details in Appendix \ref{app:examples:correlation vs main effect dependencies} and \ref{app:examples: zero main order dependencies}).
\paragraph{Estimation}{
In practice, the optimal models $f^*$ and $g^*$ are not available, but can be approximated by ML models. To avoid bias due to overfitting, the scores can be reformulated in terms of empirical risk on test data (details in Appendix \ref{appendix:estimation}). %
We highlight that only three additional model fits are required for the decomposition, irrespective of the group size. %
}
\paragraph{Illustrative Examples}
First, we revisit Example \ref{example:cooperative-forces-cancel-out}, where we can now reveal the relevance of cooperation (derivations in Appendix \ref{app:example:linear}). Then we illustrate the interpretation of DIP in Examples \ref{Example: student redundancy} to \ref{example: binary interaction} (derivations in Appendix \ref{app:examples:student examples}).
\begin{continuance}{example:cooperative-forces-cancel-out}[\textbf{Cooperative Forces May Cancel Out}, Figure \ref{fig:forceplot-coop-forces-cancel-out}]
For DGP 1 ($c=\beta=0$), the interaction surplus and the main effect dependencies both vanish, as one would expect; see Figure \ref{fig:forceplot-coop-forces-cancel-out}, left bar. On the other hand, for DGP 2 $\left(c=\sqrt{6}, \beta = 0.5\right)$ we get a (normalized) interaction surplus of $0.26$, cross-predictability of $0.07$ and covariance of $0.19$, 

such that interaction surplus and main effect dependencies cancel out (Figure \ref{fig:forceplot-coop-forces-cancel-out}, right bar).
\end{continuance}
\begin{example}[\textbf{Negative Cooperative Impact via Dependence}, Figure \ref{fig: student redundancy}]\label{Example: student redundancy}
Suppose we want to predict a student's points $Y$ in an exam based on two binary features that indicate whether a student did their homework $(X_1=1)$ or not $(X_1 = 0)$ and whether a student studied for at least 10 hours $(X_2=1)$ or not $(X_2=0)$. We assume the two features are $\mathrm{Ber}\left(0.5\right)$-distributed and positively correlated with $P(X_1=X_2)=0.75$ (see Figure \ref{fig: student redundancy}, top left), because doing homework requires time.
Assume the points follow the rule $Y= 4X_1 + 4X_2$ (see Figure \ref{fig: student redundancy}, bottom left), meaning there are no interactions, so $h^*=0$. This results in a negative cooperative impact $\Psi(1,2)=-6$ (\textcolor{v1gray}{$v(1)=9$}, \textcolor{v2gray}{$v(2)=9$}, and $v(1\cup 2) =12$). The DIP decomposition delivers a \textcolor{cross-pred-purple}{cross-predictability of $2$} and a \textcolor{cov-purple}{main effect covariance of $4$}, whose negatives sum up to the observed cooperative effect of $\Psi(1,2)=-6$. %
\end{example}
Intuitively, this negative cooperative impact makes sense since the two features are correlated and can partially replace each other. This leads to either variable being able to recover more than half of the explained variance of $Y$ on its own and thus $\Psi(1,2) < 0$. In short the negative effect of the main effect dependencies indicates that the two variables have similar information about the target.
\begin{example}[\textbf{Positive Cooperative Impact via Dependence}, Figure \ref{fig: student enhancement}]\label{example: student enhancement}
In our second example, $Y$ still reflects the points and $X_1$ whether a student did homework, but feature 2 now indicates whether the student attended the review session $(X_2=1)$ or not $(X_2=0)$. Students who did their homework tend not to attend the review session, so we again choose $X_1$ and $X_2$ to be $\mathrm{Ber}\left(0.5\right)$-distributed but this time negatively correlated with $P(X_1=X_2)=0.25$, see Figure \ref{fig: student enhancement}, top left.
The relationship between points and features is given by $Y=4X_1+2X_2$
(see Figure \ref{fig: student enhancement}, bottom left), so again $h^*=0$. This time, we get a positive cooperative impact (\textcolor{v1gray}{$v(1)=2.25$}, \textcolor{v2gray}{$v(2)=0$}, and $v(1\cup 2)=3$). The \textcolor{cross-pred-purple}{cross-predictability is $1.25$} and the \textcolor{cov-purple}{main effect covariance $-2$}, outweighing the former. %
\end{example}
Why do the two features have more predictive power together than individually, despite the absence of any interaction? Although attending the review session adds two points, we have $v(2)=0$. Indeed, both columns of the lower table in Figure \ref{fig: student enhancement} have a weighted average of 3, that is, $\E(Y\mid X_2=0)=\E(Y\mid X_2=1) = 3$. So, knowing solely $X_2$ does not help predicting $Y$. This is due to the correlation. For students who attended the review session, it is more likely that they did not do homework, which is bad for their score. This cancels out the positive effect of the review session. One could say the correlation works against the prediction. Once we use both features, the predictive power of $X_2$ is revealed because $X_1$ is known. The lower table of Figure \ref{fig: student enhancement} again illustrates this well: Once we can distinguish the rows, the difference between the two columns becomes visible. \\
The crucial part of Example \ref{example: student enhancement} is that the values of $X_1$ and $X_2$ that are likely to occur concurrently have opposing effects on $Y$. Formally, that is $\Cov(g^*_1, g^*_2)<0$. The example illustrates how a negative main effect covariance can improve the features' joint predictive power compared to their individual ones.\footnote{Note that this phenomenon does not need one of the two value functions to vanish. We simply chose an example with a vanishing value function for better illustration. %
}
\begin{example}[\textbf{Interactions}, Figure \ref{fig: binary interaction}]\label{example: binary interaction}
    Let us again consider the positively correlated $\mathrm{Ber}\left(0.5\right)$-distributed variables $X_1$ and $X_2$ from Example \ref{Example: student redundancy} that satisfy $P(X_1=X_2)=0.75$. This time, we set
    $Y= 8(X_1\vee X_2) - 1$,
    where $\vee$ denotes the logical OR-operator. This may be rewritten as as 
    $Y=4X_1+4X_2+4(X_1\oplus X_2) - 1,$
    where $\oplus$ denotes the XOR-operator. The expression $h(X)=4(X_1 \oplus X_2) - 1$ is a pure interaction. To verify this, consider the lower table of Figure \ref{fig: binary interaction}. The weighted average of each row and each column is zero, which formally means $\E(h\mid X_1)=0$ and $\E(h\mid X_2)=0$.\footnote{Note that the constant $-1$ is to mean-center the function, which is a necessary condition for our definition. Otherwise, the expressions $\E(h\mid X_1)$ and $\E(h\mid X_2)$ would be constant but not zero.} This means, we get the (unique) decomposition of $Y$ into the GAM $g^*(X)=4X_1+4X_2$ and the pure interaction $h^*(X)=4(X_1\oplus X_2) -1$. Using this decomposition, we get the same standalone and main effect dependence contributions as in Example \ref{Example: student redundancy} (\textcolor{v1gray}{$v(1)=9$}, \textcolor{v2gray}{$v(2)=9$}, \textcolor{cross-pred-purple}{cross-predictability of $2$}, and \textcolor{cov-purple}{covariance of $4$}). This illustrates that the standalone contributions and $\mathrm{Dep}\left(J, \bar{J} \right )$ are not affected by the pure interaction. The \textcolor{interaction-green}{interaction surplus $\Var(h^*)=3$} is simply added to these values causing $\Psi(1,2)$ and $v(1\cup 2)$ to be increased by $3$. %
\end{example}
\section{APPLYING THE DECOMPOSITION TO FEATURE IMPORTANCE}
\label{sec:decomposing-feature-importance}

Now, we show how to apply the DIP decomposition to explain Leave-One-Covariate-Out (LOCO) importance \citep{lei2018distribution,williamson2021nonparametric}, a popular feature importance technique.\\
The LOCO score is defined as the drop in predictive power when removing one variable from the full set of features and can be rewritten using Definition \ref{def:cooperative-impact} as%
\[LOCO_j := v\left(j \cup \bar{j}\right) - v\left(\bar{j}\right) = v(j) + \Psi\left(j,\bar{j}\right).\]
To explain the relevance of interactions and dependencies for a LOCO score, we can decompose the cooperative impact $\Psi\left(j,\bar{j}\right)$ using Theorem \ref{thm: decomposition for groups}.\\ %
Using the same trick, we can also explain the relevance of cooperation for Shapley effects \citep{song2016shapley}, also called SAGE values \citep{covert2020understanding} (Appendix \ref{app: shapley effects}). More generally, we can apply the method to any feature importance method based on predictive power comparisons of the form $v(S \cup T) - v(T)$.
\begin{figure*}[ht]
    \centering
    \begin{subfigure}[t]{0.47\textwidth}
         \centering
         \includegraphics[width=0.99\linewidth]{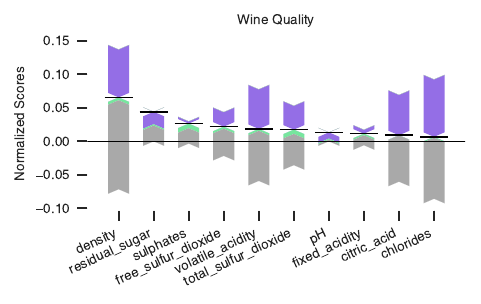}
    \end{subfigure}
    \hfill
    \begin{subfigure}[t]{0.47\textwidth}
        \centering
        \includegraphics[width=0.99\linewidth]{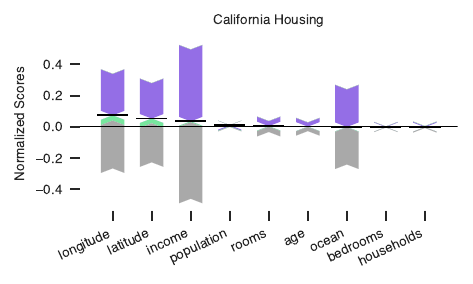}
     \end{subfigure}
    \caption{\textbf{Applications.} We decompose the LOCO scores on the wine quality dataset (left) and the California Housing dataset (right) into each feature's \textcolor{v2gray}{standalone contribution}, the \textcolor{interaction-green}{interaction surplus}, and the contribution of \textcolor{dependence-purple}{main effect dependencies}.}
    \label{fig:experiments-results}
\end{figure*}
\section{APPLICATIONS}\label{section: experiments}

As follows we apply the DIP decomposition to real-world data. More specifically, we compute and decompose the LOCO scores. We apply the method to two datasets, the wine quality dataset \citep{cortez2009modeling} and the California Housing dataset found in \citep{geron2022hands}. More detailed results, including results for SAGE, are reported in Appendix \ref{appendix:more-experimental-results}.
\paragraph{Implementation}{
To estimate the full model and to decompose it into pure interactions and main effects, we leverage the explainable boosting machine implemented in the \texttt{interpretML} package \citep{nori2019interpretml}. We compute the scores on test data as described in Appendix \ref{appendix:estimation-test-train}. We employ a $10$-fold cross-validation scheme. All scores are normalized by the variance of the target variable, thus indicating the proportion of the variance that is explained. We implemented the methods as a \texttt{python} package (\url{https://github.com/gcskoenig/dipd}); all code is publicly available (see Appendix \ref{appendix:estimation-implementation}, \url{https://github.com/gcskoenig/dipd-experiments}).}
}
\paragraph{Wine Quality}{For this dataset ($n=6496$, \cite{cortez2009modeling}), obtained from the UCI ML Repository \citep{Dua:2019} the goal is to predict \emph{wine quality} (a score between one and ten) using ten physicochemical characteristics such as \emph{citric acidity}, \emph{residual sugar}, and \emph{density}.
Suppose we use LOCO to gain insight into which variables are most relevant for predicting the target. 
The scores in Figure \ref{fig:experiments-results} (left, black horizontal lines) %
suggest that \emph{density} and \emph{residual sugar} are most relevant, but that \textit{citric acidity} is irrelevant. The scores are prone to misinterpretation.\\
First, one may erroneously infer that variables with large scores, like \textit{residual sugar}, also have large standalone predictive power. The DIP decomposition (Figure \ref{fig:experiments-results}, left) reveals that \textit{residual sugar} is relevant due to cooperation instead, and thus, its role in predicting the target can only be understood in combination with other features. A pairwise DIP decomposition further reveals a positive cooperative impact between \emph{residual sugar} and \emph{density} (Appendix \ref{appendix:pairwise-wine-quality}); an in-depth analysis shows that the features are positively correlated (adding sugar increases the density) but have opposing effects on wine quality that cancel out unless both features are observed (Appendix \ref{appendix:relation-density-sugar-quality}).\\
Second, one may erroneously conclude that features with small LOCO scores, like \textit{citric acidity}, contain little predictive information about $Y$. Instead, DIP reveals that \textit{citric acidity} is one of the most predictive standalone features but is considered irrelevant by LOCO due to its redundancy with the remaining features. A pairwise DIP decomposition reveals that \emph{citric acidity} shares most of its contribution with \textit{volatile acidity}, suggesting that they have similar roles for the target (Appendix \ref{appendix:pairwise-wine-quality}).}
\paragraph{California Housing}{The goal of the California Housing dataset ($n=20433$, \cite{geron2022hands}) is to predict the 1990 median house price of districts in California based on characteristics such as \textit{longitude}, \textit{latitude}, and \textit{ocean proximity}. %
The variables \emph{longitude} and \emph{latitude} have the highest LOCO scores, and one may erroneously conclude that the variables are individually important for the outcome. However, as DIP reveals, the variables' LOCO scores mostly stem from interactions (Figure \ref{fig:experiments-results}, right). As such, we need to consider both variables together to fully understand their role for $Y$. Furthermore, DIP reveals that seemingly irrelevant variables such as \textit{ocean proximity} and \textit{income} in fact are useful standalone predictors that receive low scores since they share their contributions with the remaining features.
}

\section{CONCLUSION}
Throughout the paper, we introduced DIP, a method to decompose the $\LL^2$-loss-based predictive power of two groups of features into their standalone predictive power as well as their cooperation via interactions and dependencies.\\
DIP can be used to explain the the outputs of commonly used global feature importance methods such as LOCO \citep{lei2018distribution,williamson2021nonparametric} or SAGE \citep{covert2020understanding}. More generally, DIP is applicable to any method based on predictive power comparisons involving two groups of features.\\
Thereby, DIP allows novel insight into the DGP: It reveals which features are individually relevant and which are due to interactions and dependencies, showing whether variables must be analyzed jointly to understand their role for the target. Furthermore, we can assess which variables share predictive contributions and thus have similar relationships with $Y$; insight that cannot be obtained by simply measuring the dependencies between features. 
Since these questions are of central interest to scientific inquiry, we are convinced that DIP has great potential to enable relevant discoveries.

\subsubsection*{Acknowledgements}

The authors thank the International Max Planck Research School for Intelligent Systems (IMPRS-IS) for supporting Eric Günther.
This work has been supported by the German Research Foundation through the Cluster of Excellence ``Machine Learning - New Perspectives for Science'' (EXC 2064/1 number 390727645). We thank Sebastian Bordt, Moritz Haas, and Karolin Frohnapfel for their feedback and Giles Hooker for a helpful email conversation about the uniqueness of the fANOVA decomposition.

\bibliography{main.bib}

\onecolumn
\appendix
\section{THEORY}
\label{appendix:theory}
\subsection{Proofs}
\subsubsection{Necessary Lemmata}\label{app: lemma subsection}
\begin{Lem}[\textbf{Equivalence of Orthogonality and Non-approximability}]\label{app: Lem: equivalence orthogonality and cond expectation}
    Let $f:\R^d\to \R$ be a %
    function on some probability space $\left(\R^d, P\right)$ and $J\subseteq \R$ a subsest of variables. Then the following are equivalent: 
    \begin{enumerate}
        \item $\E(f\cdot h)=0$ for all $h\in\LL^2\left(\R^J,P\right)$
        \item $\E(f\mid X_J)=0.$
    \end{enumerate} 
\end{Lem}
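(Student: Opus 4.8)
The plan is to recognize this as the standard characterization of a vanishing conditional expectation via orthogonality in $\LL^2$, and to prove the two implications separately using only the defining ``taking out what is known'' property of conditional expectation together with the tower property. Throughout we use that $f\in\LL^2(\R^d,P)$ by the blanket assumption, so that $\E(f\mid X_J)$ exists and lies in $\LL^2(\R^J,P)$ (the space of square-integrable $\sigma(X_J)$-measurable functions).

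For the direction $(2)\Rightarrow(1)$, assume $\E(f\mid X_J)=0$. Fix any $h\in\LL^2(\R^J,P)$. Then $h$ is $\sigma(X_J)$-measurable, so by the tower property and by pulling $h$ out of the inner conditional expectation,
\[
\E(f\cdot h)=\E\bigl(\E(f h\mid X_J)\bigr)=\E\bigl(h\,\E(f\mid X_J)\bigr)=\E(h\cdot 0)=0.
\]
(One should note $fh\in\LL^1$ by Cauchy–Schwarz, so the tower property applies.)

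For the direction $(1)\Rightarrow(2)$, assume $\E(f\cdot h)=0$ for every $h\in\LL^2(\R^J,P)$. Since $f\in\LL^2$, the function $h^{\star}:=\E(f\mid X_J)$ is itself an element of $\LL^2(\R^J,P)$, so it is an admissible test function. Plugging it in and again using the tower property together with taking the $\sigma(X_J)$-measurable factor $h^{\star}$ out of the inner conditional expectation gives
\[
0=\E\bigl(f\cdot h^{\star}\bigr)=\E\bigl(\E(f h^{\star}\mid X_J)\bigr)=\E\bigl(h^{\star}\,\E(f\mid X_J)\bigr)=\E\bigl((h^{\star})^2\bigr).
\]
As $(h^{\star})^2\ge 0$, its expectation being zero forces $h^{\star}=\E(f\mid X_J)=0$ $P$-almost surely, which is statement $(2)$.

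I do not expect any real obstacle here; the only points requiring a little care are the integrability bookkeeping (invoking $f\in\LL^2$ so that $\E(f\mid X_J)\in\LL^2(\R^J,P)$ and $fh\in\LL^1$ for $h\in\LL^2(\R^J,P)$) and being explicit that $\LL^2(\R^J,P)$ consists of the square-integrable $\sigma(X_J)$-measurable functions, which is exactly why ``taking out what is known'' is legitimate in both implications.
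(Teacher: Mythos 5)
Your proof is correct and takes essentially the same route as the paper: both directions hinge on the same key move of testing against $h=\E(f\mid X_J)$ (resp.\ an arbitrary $h\in\LL^2(\R^J,P)$), the only difference being that you use the tower property and ``taking out what is known'' directly, whereas the paper runs the identical computation through the law of total covariance after first centering $f$ via $h\equiv 1$. If anything your version is slightly leaner, since $\E\bigl((\E(f\mid X_J))^2\bigr)=0$ yields $\E(f\mid X_J)=0$ almost surely at once, without the paper's intermediate step of showing the conditional expectation is constant and then that the constant is zero.
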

\begin{proof}\
    \begin{description}
        \item[$1.\Longrightarrow 2.$] Note that by setting $h\equiv 1$ we get $\E(f)=0$. We start by considering $h=\E(f\mid X_J)$ which gives $ \E(f\cdot \E(f\mid X_J))=0$. Since $f$ is centered we can replace the expected value of the product by a covariance and then use the law of total covariance to receive
        \begin{align*}
            0 &= \E(f\cdot \E(f\mid X_J)) =\Cov(f, \E(f\mid X_J)) \\&= \Cov(\E(f\mid X_J), \E(f\mid X_J)) + \underbrace{\E(\Cov(f, \E(f\mid X_J)\mid X_J))}_{=0} \\&= \Var(\E(f\mid X_J)),
        \end{align*}
        so $\E(f\mid X_J)$ is constant. Since its expected value yields $\E(\E(f\mid X_J))=\E(f) = 0$, the function $\E(f\mid X_J)$ must already be constant zero. 
    \item[$2.\Longrightarrow 1.$] Note that again, $\E(f)=\E(\E(f\mid X_J)) = 0.$ For an arbitrary $h\in\LL^2\left(\R^J,P\right)$, using the law of total covariance, we directly compute
    \begin{align*}
        \E(f\cdot h) = \Cov(f, h) = \Cov(\underbrace{\E(f\mid X_J)}_{=0}, \E(h\mid X_J)) + \underbrace{\E(\Cov(f, h\mid X_J))}_{=0} = 0.
    \end{align*}
    \end{description}
\end{proof}
\begin{Lem}[\textbf{Equivalence of Approximating the Data and Approximating a Better Predictor}]\label{app: Lem: tower property}
    Let $(X,Y)\sim P$ be a data generating process and consider two function classes $\mathcal{F}$ and $\mathcal{G}$ such that $\mathcal{G}\subseteq\mathcal{F}$. Let $f$  be the $\LL^2$-optimal predictor in the function class $\mathcal{F}$. Then, for a function  $g \in \mathcal{G}$ the following are equivalent:
    \begin{enumerate}
        \item The function $g$ is the $\LL^2$-optimal predictor for $Y$ within $\mathcal{G}$
        \item The function $g$ is the $\LL^2$-optimal approximation of $f$ within $\mathcal{G}$.
    \end{enumerate}
\end{Lem}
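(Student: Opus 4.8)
The plan is to prove both implications at once via an orthogonal-decomposition (Pythagoras) argument in $\LL^2(P)$. Fix any $g \in \mathcal{G}$. Since $\mathcal{G} \subseteq \mathcal{F}$, both $f$ and $g$ lie in $\mathcal{F}$, and hence so does $f - g$ (using that the function classes of interest -- all $\LL^2$ functions of $X$, GAMs, GGAMs -- are linear subspaces). Then
\begin{align*}
\E\big((Y-g)^2\big) = \E\big((Y-f)^2\big) + \E\big((f-g)^2\big) + 2\,\E\big((Y-f)(f-g)\big).
\end{align*}
If the cross term vanishes, then $\E((Y-g)^2)$ and $\E((f-g)^2)$ differ only by the additive constant $\E((Y-f)^2)$, which does not depend on $g$; hence a given $g \in \mathcal{G}$ minimizes the left-hand side over $\mathcal{G}$ if and only if it minimizes $\E((f-g)^2)$ over $\mathcal{G}$. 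That is exactly the claimed equivalence of statement 1.\ and statement 2.

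The key step is therefore to show $\E((Y-f)(f-g)) = 0$, which is the first-order optimality (orthogonality) condition for the $\LL^2$-projection onto $\mathcal{F}$: since $f$ is the $\LL^2$-optimal predictor of $Y$ within the linear subspace $\mathcal{F}$, the residual $Y - f$ is orthogonal to every element of $\mathcal{F}$, in particular to $f - g \in \mathcal{F}$. Concretely, considering the perturbation $f + t(f-g) \in \mathcal{F}$ and differentiating $t \mapsto \E\big((Y - f - t(f-g))^2\big)$ at $t = 0$ gives $\E((Y-f)(f-g)) = 0$. In the setting where the lemma is applied -- $\mathcal{F} = \LL^2(\R^d, P)$ and $f = f^* = \E(Y \mid X)$ -- this is precisely Lemma \ref{app: Lem: equivalence orthogonality and cond expectation} applied to the centered residual $Y - f^*$, since $\E(Y - f^* \mid X) = 0$ implies $\E((Y-f^*)k) = 0$ for all $k \in \LL^2(\R^d, P) \supseteq \mathcal{G}$.

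The main point to be careful about is the ambient structure rather than any hard computation: the argument requires $\mathcal{F}$ and $\mathcal{G}$ to be (closed) linear subspaces of $\LL^2(P)$, so that $f - g \in \mathcal{F}$ and the perturbation $f + t(f-g)$ stays in $\mathcal{F}$; this holds for all function classes used in the paper. We also presuppose, as the statement does, that an optimal predictor $f$ exists -- no uniqueness of $f$, nor of the minimizer within $\mathcal{G}$, is needed, only orthogonality of the residual. Once the cross term is dispatched, the decomposition above is routine and the equivalence follows immediately.
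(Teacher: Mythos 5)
Your proof is correct, and it takes a slightly different route than the paper's. The paper proves the two implications separately by invoking the characterization that $g$ is $\LL^2$-optimal within $\mathcal{G}$ if and only if its residual is perpendicular to $\mathcal{G}$, and then splits the inner product $\E((Y-g)\,h) = \E((Y-f)\,h) + \E((f-g)\,h)$ against arbitrary test functions $h\in\mathcal{G}$, using that $\E((Y-f)\,h)=0$ because $h\in\mathcal{G}\subseteq\mathcal{F}$. You instead expand the squared loss itself, $\E((Y-g)^2) = \E((Y-f)^2) + \E((f-g)^2)$, after killing the cross term $\E((Y-f)(f-g))$ via orthogonality of the residual $Y-f$ to $f-g\in\mathcal{F}$; since the two objectives then differ by a constant independent of $g$, both directions follow at once. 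Both arguments rest on the same key fact (the residual of the $\mathcal{F}$-projection is orthogonal to $\mathcal{F}$, which needs the linear-subspace structure you correctly flag and which the paper leaves implicit in its citation of Luenberger), but your version is marginally more economical: it never needs the optimality-iff-orthogonality criterion \emph{within} $\mathcal{G}$, and in fact only uses that $\mathcal{G}$ is a subset of the linear space $\mathcal{F}$ (so it would even cover non-linear subclasses $\mathcal{G}$), whereas the paper's direction-by-direction verification additionally treats $\mathcal{G}$ as a class to which the perpendicularity characterization applies. The trade-off is that you need $f-g\in\mathcal{F}$, i.e.\ linearity of $\mathcal{F}$, which the paper's proof uses only through $Y-f\perp h$ for $h\in\mathcal{G}$; in the paper's applications (GGAMs inside $\LL^2$) both requirements hold, so the two proofs are equally valid there.
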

\begin{proof}
    Note that a predictor is $\LL^2$-optimal within a function class if and only if its residual is perpendicular to the function class \citep{luenberger1997optimization}. 
    \begin{description}
        \item[$1.\Longrightarrow 2.$] Let us assume that $g$ is optimal for $Y$. So, for an arbitrary $h\in \mathcal{G}$ we have
    \begin{align*}
        0&= \E((Y-g(X))\cdot h(X)) = \underbrace{\E((Y-f(X))\cdot h(X))}_{=0}+ \E((f(X)-g(X))\cdot h(X)) \\
        &= \E((f(X)-g(X))\cdot h(X)),
    \end{align*}
    where $\E((Y-f(X))\cdot h(X))$ vanishes because $f$ is the optimal predictor for $Y$ within $\mathcal{F}$ and $h$ is contained in $\mathcal{F}$. From that, we directly conclude that $g$ is $\LL^2$-optimal for $f(X)$.
    \item[$2.\Longrightarrow 1.$] If $g$ is optimal for $f(X)$ we again take an arbitrary $h\in \mathcal{G}$ and compute
    \begin{align*}
        0 &= \E((f(X)-g(X))\cdot h(X)) = \E((Y-g(X))\cdot h(X)) - \underbrace{\E((Y-f(X))\cdot h(X))}_{=0}\\
        &= \E((Y-g(X))\cdot h(X))
    \end{align*}
    and hence, $g$ is optimal for $Y$. 
            
    \end{description}
\end{proof}

\subsubsection{Proof of Theorem \ref{Thm: GAM residual gives pure interaction}}\label{app: proof equivalence pure interaction}
\begin{proof}[Proof of Theorem \ref{Thm: GAM residual gives pure interaction}]
     The function $g$ is the $\LL^2$-optimal approximation of $f$ within all GGAMs in $X_J$ and $X_{\bar{J}}$  if and only if $f-g$ is perpendicular to the class of all GGAMs in $X_J$ and $X_{\bar{J}}$ \citep{luenberger1997optimization}. This is equivalent of saying $f-g$ is perpendicular to all functions that only depend on $X_J$ and all functions that only depend on $X_{\bar{J}}$. The claim then follows from  Lemma \ref{app: Lem: equivalence orthogonality and cond expectation}.
\end{proof}

\subsubsection{Uniqueness of the GGAM Components}\label{app: uniqueness GAM}
Here, we prove that the GGAM components are unique up to a constant under some mild assumptions on the underlying distribution and the GGAM. Note that a stronger result than uniqueness up to a constant cannot be proven because it is always possible to add a constant to $g_J$ and subtract the same one from $g_{\bar{J}}$, no matter how strong our assumptions are. However, when decomposing the cooperative impact in Theorem \ref{thm: decomposition for groups}, we take variances and covariances and hence, these constants do not change the DIP decomposition.

\begin{thm}[\textbf{Uniqueness of the GGAM Components}]\label{Thm: uniqueness GAM}
	Let $g:\R^d\to \R$ be a measurable function on some probability space $\left(\R^d, P\right)$ that admits a decomposition
	$$g(x) = g_J(x_J) + g_{\bar{J}}(x_{\bar{J}}).$$ 
    Assume that one of the following three assumptions is satisfied:
    \begin{enumerate}
        \item The probability measure on $\R^d$ is discrete and $P(A_J\times A_{\bar{J}})>0$ for all values $A_J, A_{\bar{J}}$ that $X_J$ and $X_{\bar{J}}$ take 
        \item The probability measure on $\R^d$ is continuous with some strictly positive density $p>0$ and $g, g_J, g_{\bar{J}}$ only take finitely many values 
        \item The probability measure on $\R^d$ is continuous with some strictly positive density $p>0$ and $g, g_J, g_{\bar{J}}$ are continuous.
    \end{enumerate}
	Then, the components $g_J$ and $g_{\bar{J}}$ are unique almost surely up to a constant.
\end{thm}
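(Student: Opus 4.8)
The plan is to show that if $g = g_J + g_{\bar J} = \tilde g_J + \tilde g_{\bar J}$ are two GGAM decompositions of the same function, then $d_J := g_J - \tilde g_J$ and $d_{\bar J} := g_{\bar J} - \tilde g_{\bar J}$ satisfy $d_J(x_J) + d_{\bar J}(x_{\bar J}) = 0$ almost surely, and that this forces both $d_J$ and $d_{\bar J}$ to be (almost surely) constant. So the core combinatorial claim, independent of the three cases, is: \emph{if a function of $x_J$ plus a function of $x_{\bar J}$ vanishes $P$-a.s., then each summand is a.s.\ constant.} The three assumptions are exactly what is needed to run this argument rigorously in the discrete and the continuous settings.

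For assumption 1 (discrete, full-support product of supports), I would argue directly: let $A_J, A_J'$ be two values in the support of $X_J$ and $A_{\bar J}$ a value in the support of $X_{\bar J}$. Since $P(A_J \times A_{\bar J}) > 0$ and $P(A_J' \times A_{\bar J}) > 0$, both points have positive mass, so the a.s.\ identity $d_J + d_{\bar J} = 0$ holds at both, giving $d_J(A_J) = -d_{\bar J}(A_{\bar J}) = d_J(A_J')$. Hence $d_J$ is constant on the support of $X_J$, and symmetrically for $d_{\bar J}$; the two constants are negatives of each other, which is harmless. This is the cleanest case.

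For assumptions 2 and 3 the subtlety is that "a.s." only controls behaviour on a set of full measure, and the strict positivity of the density $p > 0$ is what lets us upgrade "a.s." to a statement we can evaluate pointwise. Let $N \subseteq \R^d$ be the $P$-null set off which $d_J(x_J) + d_{\bar J}(x_{\bar J}) = 0$; since $p > 0$ everywhere, $N$ is also Lebesgue-null, so its complement is dense and in fact has full-measure sections in a Fubini sense. In case 3 (everything continuous) I would fix a point $(x_J^0, x_{\bar J}^0)$ whose two "slices" $\{x_J^0\}\times\R^{\bar J}$ and $\R^J\times\{x_{\bar J}^0\}$ meet the complement of $N$ on a dense set, and use continuity of $d_J, d_{\bar J}$ to pass the identity $d_J(x_J) = -d_{\bar J}(x_{\bar J})$ to \emph{all} $x_J$ and $x_{\bar J}$; then the same two-point comparison as in case 1 shows $d_J, d_{\bar J}$ are genuinely constant (everywhere, hence a.s.). In case 2 ($d_J, d_{\bar J}$ finitely-valued), I would instead note that $d_J$ partitions $\R^J$ into finitely many level sets, at least one of which, say $\{d_J = c\}$, has positive Lebesgue measure (equivalently positive $P_{X_J}$-measure); pairing that set against a positive-measure level set of $d_{\bar J}$ and invoking Fubini on the null set $N$ produces a point of $N^c$ inside their product, forcing the complementary level value to be $-c$; iterating over all level sets of positive measure shows $d_{\bar J}$ takes the single value $-c$ a.s., and then $d_J = c$ a.s.

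The main obstacle is case 2 / case 3: making the "slice through the complement of the null set" argument airtight, i.e.\ correctly invoking Fubini's theorem for the null set $N$ (its $x_J$-sections are a.e.\ Lebesgue-null, and dually) and combining this with either continuity or the finite-range hypothesis to obtain a statement that can be evaluated at concrete pairs of points. The discrete case 1 is essentially immediate; the continuous cases require this measure-theoretic care, and the role of each of the three hypotheses (full-support product; positive density plus finite range; positive density plus continuity) is precisely to supply the ingredient that makes this slicing legitimate. Finally I would remark, as the theorem statement already anticipates, that uniqueness only up to an additive constant is the best possible — one can always shift a constant between $g_J$ and $g_{\bar J}$ — and that this ambiguity is immaterial for the DIP decomposition of Theorem \ref{thm: decomposition for groups}, which only involves variances and covariances of the components.
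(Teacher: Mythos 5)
Your proposal is correct and follows the same overall strategy as the paper: subtract the two decompositions, reduce to the claim that $d_J(x_J) + d_{\bar{J}}(x_{\bar{J}}) = 0$ almost surely forces both summands to be a.s.\ constant, and exploit the strictly positive density so that products of positive-(Lebesgue-)measure sets have positive probability. The differences are in execution. For cases 1 and 2 the paper argues by contradiction (two distinct values of positive probability for one component would yield a positive-probability product set on which the a.s.\ identity fails), whereas you argue directly via a two-point comparison in the discrete case and by pairing positive-measure level sets in the finite-range case; this is the same mechanism, just not phrased as a contradiction. In case 3 the routes genuinely diverge: the paper again sets up a contradiction and uses continuity only to produce small balls around two points whose images under the two components are disjoint, so that the product of the balls has positive probability and contradicts the a.s.\ identity; you instead use Fubini to find a slice $\R^J \times \{x_{\bar{J}}^0\}$ whose section of the exceptional null set is Lebesgue-null, conclude the identity on a dense subset of the slice, and extend by continuity to all of $\R^J$, giving constancy directly. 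Both are sound; your slicing argument is more direct but requires the Fubini/section care you correctly flag as the main obstacle, while the paper's ball construction avoids Fubini altogether. Your closing remarks on the unavoidable additive-constant ambiguity and its irrelevance for Theorem \ref{thm: decomposition for groups} coincide with the paper's own discussion.
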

\begin{proof}
	Given two decompositions $g = g_J + g_{\bar{J}} = \tilde{g}_J + \tilde{g}_{\bar{J}}$ we may subtract them to receive
	$$\left(g_J - \tilde{g}_{J}\right) + \left(g_{\bar{J}} - \tilde{g}_{\bar{J}}\right) = 0\quad a.s.$$
	For simplicity of the notation we write $f_J:=g_J - \tilde{g}_{J}$ and $f_{\bar{J}}:= -\left(g_{\bar{J}} - \tilde{g}_{\bar{J}}\right)$. So we know $f_J = f_{\bar{J}}$ a.s., from which we will derive that $f_J$ and $f_{\bar{J}}$ are already constant a.s. This is the same as saying $g_J$ and $\tilde{g}_J$ as well as $g_{\bar{J}}$ and $\tilde{g}_{\bar{J}}$ coincide up to a constant. \\
	Lets first consider the assumptions $1$ and $2$. In either of the two cases the two functions $f_J, f_{\bar{J}}$ only take finitely many values. For the sake of contradiction, assume one of the functions takes two distinct values $a_1, a_2$, each with probability greater than zero. W.l.o.g. we assume this is $f_J$. We denote the preimages of $a_1$ and $a_2$ with $A_1, A_2\subseteq \R^{J}$. Furthermore, let $b$ be a value that $f_{\bar{J}}$ takes with probability greater than zero and $B\subseteq \R^{\bar{J}}$ its preimage. W.l.o.g. we assume $b\neq a_1$, otherwise, we may switch $a_1$ and $a_2$. In case of assumption $1$ we know that $P(A_1\times B)>0$.\\
    Under assumption $2$ $P(A_1\times B)>0$ still holds true as we show in the following. Note that since the density $p$ is strictly positive, a subset of $\R^d$ has some positive probability if and only if it has some positive Lebesgue-measure. This holds true for $A_1\times \R^{\bar{J}}$ and $\R^J\times B$, because $P(f_J=a_1)>0$ and $P(f_{\bar{J}}=b)>0$. Hence, $A_1$ has some positive $J$-dimensional Lebesgue-measure and $B$ some positive $\bar{J}$-dimensional Lebesgue-measure. From that we conclude that $A_1\times B$ has some positive Lebesgue-measure too and hence $P(A_1\times B) > 0$ holds true under assumption 2 as well.\\
    But $f_J(A_1\times B) = a_1 \neq b = f_{\bar{J}}(A_1\times B)$ and therefore $P(f_J\neq f_{\bar{J}}) > 0$, which is a contradiction. Thus, $f_J$ and $f_{\bar{J}}$ are both constant a.s.\\
	Let us now consider assumption 3. The proof follows a very similar argumentation here. Let us again assume for the sake of contradiction that one of the two functions is not almost surely constant, w.l.o.g. we again assume this to be $f_J$, and let $a_1,a_2$ be two values that $f_J$ takes. As before, let $b$ be a value that $f_{\bar{J}}$ takes and assume w.l.o.g. that $b\neq a_1$. If $P(f_J=a_1)>0$ and $P(f_{\bar{J}}=b)>0$, the rest of the proof follows exactly as for assumption 2. Otherwise, pick two points $z\in f_J^{-1}(a_1)\subseteq\R^J, w\in f_{\bar{J}}^{-1}(b)\subseteq \R^{\bar{J}}$. Since the functions $f_J$ and $f_{\bar{J}}$ are continuous, points close to $z$ or $w$ map to points close to $a_1$ or $b$ respectively. More formally, there exist small balls $A_1:=
    B_{\eps_1}(z)\subseteq\R^J$ around $z$ and $B:=
    B_{\eps_2}(w)\subseteq\R^{\bar{J}}$ around $w$ for some $\eps_1,\eps_2>0$, such that $f_J(A_1) \cap f_{\bar{J}}(B)= \emptyset$. By construction, $A_1$ has some positive $J$-dimensional Lebesgue-measure and $B$ some positive $\left(\bar{J}\right)$-dimensional Lebesgue-measure. Hence, also $A_1\times B$ has some positive Lebesgue-measure, which again implies $P(A_1\times B) > 0$. But due to $f_J(A_1) \cap f_{\bar{J}}(B)= \emptyset$ we again receive $P(f_J\neq f_{\bar{J}}) > 0$, which is a contradiction. So, $f_J$ and $f_{\bar{J}}$ are constant a.s.
\end{proof}

\subsubsection{Proof of Theorem \ref{thm: decomposition for groups} and Proposition \ref{prop:the-only-coop-effects}}\label{app: proof main theorem}
Proposition \ref{prop:the-only-coop-effects} is a special case of Theorem \ref{thm: decomposition for groups} resulting from $X_J$ and $X_{\bar{J}}$ being independent and $h^*=0$. 
\begin{proof}[Proof of Theorem \ref{thm: decomposition for groups}]
    Note that $g^*$ is also the best $\LL^2$-approximation of $f^*$ due to Lemma \ref{app: Lem: tower property} and so, $h^*$ is the pure interaction of $f^*$. We begin by simply expanding $v(D)=\Var(f^*)$. We receive
    \begin{align}\label{Var exp}
        \Var(f^*) = &\Var (g^*_J + g^*_{\bar{J}} + h^*) \notag\\
        = &\Var(g^*_J) + \Var(g^*_{\bar{J}}) + \Var(h^*) \\
     &+ 2 \Cov(g^*_J, g^*_{\bar{J}}) + \underbrace{2\Cov(g^*_J, h^*)}_{=0} + \underbrace{2\Cov(g^*_{\bar{J}}, h^*)}_{=0}\notag,
    \end{align}
    where the last two summands vanish due to Theorem \ref{Thm: GAM residual gives pure interaction} and Lemma \ref{app: Lem: equivalence orthogonality and cond expectation}. We now consider the approximations based on only one of the two subsets of features. Remember that $\E(Y\mid X_J)= \E(f^*\mid X_J)$ as shown in Lemma \ref{app: Lem: tower property}. We compute
    \begin{align*}
        \E(Y\mid X_J) &= \E(f^*\mid X_J) = \E(g^*_J\mid X_J) + \E(g^*_{\bar{J}}\mid X_J) + \underbrace{\E(h^*\mid X_J) }_{=0} \\
        &= g^*_J + \E(g^*_{\bar{J}}\mid X_J), \\
        \E(Y\mid X_{\bar{J}}) &= g^*_{\bar{J}} + \E(g^*_J\mid X_{\bar{J}}),
    \end{align*}
    where we again used the fact that $h^*$ is a pure interaction. Computing variances leads to 
    \begin{align*}
        v(J) &= \Var(g^*_J) + \Var(\E(g^*_{\bar{J}}\mid X_J)) + 2\Cov(g^*_J, \E(g^*_{\bar{J}}\mid X_J))\\
        v\left(\bar{J}\right) &= \Var(g^*_{\bar{J}}) + \Var(\E(g^*_J\mid X_{\bar{J}})) + 2\Cov(g^*_{\bar{J}}, \E(g^*_J\mid X_{\bar{J}})).
    \end{align*}
    Using the law of total covariance we can simplify 
    \begin{align*}
        \Cov(g^*_J, \E(g^*_{\bar{J}}\mid X_J)) &= \Cov(\E(g^*_J\mid X_J), \E(g^*_{\bar{J}}\mid X_J)) \\
        &= \Cov(g^*_J, g^*_{\bar{J}}) - \underbrace{\E(\Cov(g^*_J, g^*_{\bar{J}}\mid X_J)) }_{=0}
    \end{align*}
    and analogously, 
    \begin{align*}
        \Cov(g^*_{\bar{J}}, \E(g^*_J\mid X_{\bar{J}})) = \Cov(g^*_J, g^*_{\bar{J}}),
    \end{align*}
    leading to 
    \begin{align}\label{Var uni exp 1}
        v(J) &= \Var(g^*_J) + \Var(\E(g^*_{\bar{J}}\mid X_J)) + 2\Cov(g^*_J, g^*_{\bar{J}})\\
        v\left(\bar{J}\right) &= \Var(g^*_{\bar{J}}) + \Var(\E(g^*_J\mid X_{\bar{J}})) + 2\Cov(g^*_{\bar{J}}, g^*_J). \label{Var uni exp 2}
    \end{align}
    Putting the equations \eqref{Var exp}, \eqref{Var uni exp 1} and \eqref{Var uni exp 2} together proves the claim.
\end{proof}
\subsection{Examples}
\subsubsection{Example \ref{example:cooperative-forces-cancel-out}: Linear Function on Normally Distributed Data}\label{app:example:linear}
We consider 
$$Y=X_1 + X_2 + cX_1X_2$$
where $X\sim \N\left (0, \left( \begin{array}{cc}
		1 & \beta \\
		\beta & 1
	\end{array}\right) 
	\right )$ for arbitrary $c\in\R, \beta\in[0,1)$. In the following, we will derive the value functions, the functional decomposition and the DIP decomposition in a purely theoretical way. 
\paragraph{Mathematical Basics and Precomputations}
For all these computations we need the generalized Isserlis theorem, formalized in \cite{withers1985moments}.
\begin{thm*}[\textbf{Withers, 1985}]
    If $(X_1, ..., X_d)$ is a zero-mean multivariate normal random vector and $A=\{\alpha_1,...,\alpha_l\}$ is a subset of (not necessarily distinct) indices between $1$ and $d$, we have
    \begin{align*}
        \E(X_{\alpha_1}\cdots X_{\alpha_l}) = \sum_{p\in P_A^2}\prod_{\{i,j\}\in p} \E(X_iX_j).
    \end{align*}
    Here, $P_A^2$ is the set of all possible partitions of the set $A$ into pairs. The product then goes over all these pairs in a particular partition $p$. 
\end{thm*}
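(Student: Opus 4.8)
The plan is to prove the identity via the moment generating function (MGF) of the Gaussian, which turns the statement into a purely combinatorial bookkeeping exercise. Since $(X_1,\dots,X_d)$ is zero-mean Gaussian with covariance matrix $\Sigma$, its MGF $M(t) = \E(e^{t^\top X}) = \exp\!\big(\tfrac12\, t^\top\Sigma t\big)$ is finite and real-analytic on all of $\R^d$, so moments are read off as mixed partial derivatives at the origin: $\E(X_{\alpha_1}\cdots X_{\alpha_l}) = \partial_{t_{\alpha_1}}\cdots\partial_{t_{\alpha_l}} M(t)\big|_{t=0}$. First I would expand $M(t) = \sum_{k\ge 0}\frac{1}{k!\,2^k}\,Q(t)^k$ with $Q(t) := t^\top\Sigma t = \sum_{i,j} t_i\Sigma_{ij}t_j$, a homogeneous polynomial of degree $2$ (term-by-term differentiation is justified by the everywhere-finite MGF of a Gaussian). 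The summand $Q(t)^k$ is homogeneous of degree $2k$, so its $l$-fold first-order derivative is a polynomial all of whose monomials have degree $2k-l$; hence it contributes nothing at the origin unless $2k = l$. In particular, if $l$ is odd every term vanishes, which matches the right-hand side, since then $P_A^2 = \emptyset$ and the empty sum is $0$.

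For the surviving term (write $l = 2k$) I would apply the generalized Leibniz rule to $\partial_{t_{\alpha_1}}\cdots\partial_{t_{\alpha_{2k}}}\, Q(t)^k$, viewing $Q^k$ as a product of $k$ formally labeled copies of $Q$: the result is a sum over all ways of distributing the $2k$ labeled derivative operators among the $k$ labeled copies. Since $Q(0)=0$ and all first-order partials of $Q$ vanish at $0$, any copy receiving zero or one derivative contributes $0$ at the origin, so the only surviving distributions assign exactly two derivatives to each copy. A copy receiving $\partial_{t_{\alpha_i}}$ and $\partial_{t_{\alpha_j}}$ contributes $\partial_{t_{\alpha_i}}\partial_{t_{\alpha_j}} Q = \Sigma_{\alpha_i\alpha_j} + \Sigma_{\alpha_j\alpha_i} = 2\Sigma_{\alpha_i\alpha_j}$ by symmetry of $\Sigma$. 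Each unordered pairing $p\in P_A^2$ of the positions $\{\alpha_1,\dots,\alpha_{2k}\}$ arises from exactly $k!$ of these labeled distributions (one for each ordering of the $k$ blocks), and the contribution depends only on $p$, so the Leibniz sum equals $k!\,\cdot 2^k \sum_{p\in P_A^2}\prod_{\{i,j\}\in p}\Sigma_{\alpha_i\alpha_j}$. Multiplying by the prefactor $\frac{1}{k!\,2^k}$ cancels both the $k!$ (from the exponential series / block ordering) and the $2^k$ (from symmetrizing each quadratic form), leaving exactly $\sum_{p\in P_A^2}\prod_{\{i,j\}\in p}\Sigma_{\alpha_i\alpha_j} = \sum_{p\in P_A^2}\prod_{\{i,j\}\in p}\E(X_iX_j)$, as claimed. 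The argument never used distinctness of $\alpha_1,\dots,\alpha_l$: the positions $1,\dots,l$ are treated as labeled throughout, and a pair with $\alpha_i = \alpha_j$ simply yields a variance factor $\E(X_i^2)$.

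As a cross-check (and an alternative route) I would use Gaussian integration by parts (Stein's identity): for jointly Gaussian zero-mean variables, $\E(X_{\alpha_1}\,\varphi(X)) = \sum_{j}\E(X_{\alpha_1}X_j)\,\E(\partial_{x_j}\varphi)$; taking $\varphi(x) = x_{\alpha_2}\cdots x_{\alpha_l}$ gives the recursion $\E(X_{\alpha_1}\cdots X_{\alpha_l}) = \sum_{m=2}^{l}\E(X_{\alpha_1}X_{\alpha_m})\,\E\big(\prod_{m'\ne 1,m} X_{\alpha_{m'}}\big)$ — "pair $\alpha_1$ with some $\alpha_m$, then recurse". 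Unfolding this with base cases $\E(1)=1$ and $\E(X_{\alpha_1})=0$ generates precisely the sum over all pairings of $\{\alpha_1,\dots,\alpha_l\}$ and vanishes when $l$ is odd. In either route the only real obstacle is the combinatorial accounting — verifying that the $1/k!$ and $1/2^k$ factors exactly compensate for counting ordered pairings and for the factor $2$ produced when a symmetric quadratic form is differentiated twice — plus the routine analytic justification for differentiating the globally convergent power series of $M$ term by term.
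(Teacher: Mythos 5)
Your proposal is correct. Note, however, that the paper does not prove this statement at all: it is quoted verbatim as an external result from Withers (1985) and used purely as a computational tool in Appendix~\ref{app:example:linear}, so there is no in-paper argument to compare against. Your argument is the standard self-contained derivation of the (generalized) Isserlis/Wick formula, and the details check out: the everywhere-finite Gaussian MGF justifies reading off moments as derivatives of $\exp\bigl(\tfrac12 t^\top \Sigma t\bigr)$ at the origin and differentiating the exponential series term by term; the homogeneity argument correctly isolates the $k=l/2$ term and disposes of odd $l$ (matching the empty sum over $P_A^2$); and the Leibniz bookkeeping is right, since each unordered pairing of the $l=2k$ labeled positions arises from exactly $k!$ assignments of derivative pairs to the $k$ copies of the quadratic form, each copy contributing $2\Sigma_{\alpha_i\alpha_j}$, so the $k!\,2^k$ exactly cancels the series prefactor. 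Treating the positions $\alpha_1,\dots,\alpha_l$ as labeled also correctly handles repeated indices (yielding variance factors), which is precisely the generality the paper needs, e.g.\ for terms like $\Cov\bigl(X_1X_2,X_1^2\bigr)$. The Stein-identity recursion you sketch as a cross-check is an equally valid alternative route and arguably closer in spirit to how such pairing formulas are usually verified by hand; either version would serve as a complete proof of the quoted theorem.
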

This means in particular that the expression vanishes if $l$ is odd, because in this case there are zero possible partitions of $\{\alpha_1,...,\alpha_l\}$ into pairs.\\
With the aid of this theorem we compute the following expressions that will be used afterwards. Note that $\E(X_1^2)=\E(X_2^2)=1$ and $\E(X_1X_2)=\beta$.
\begin{align*}
		\Var(X_1X_2) &= \E\left( X_1^2X_2^2\right) - \E\left( X_1X_2\right)^2 \\
							 &= \E\left (X_1^2\right )\E\left (X_2^2\right ) + \E\left (X_1X_2\right )^2+ \E\left (X_1X_2\right )^2 - \E(X_1X_2)^2 \\
							 &= \E\left (X_1^2\right )\E\left (X_2^2\right ) + \E(X_1X_2)^2 \\
							 &= 1+\beta^2,\\
		\Var\left (X_1^2\right ) &= \E\left (X_1^4\right ) - \E\left (X_1^2\right )^2
											=3\E\left (X_1^2\right )^2- \E\left (X_1^2\right )^2
											=2\E\left (X_1^2\right )^2  \\
											&=2,\\
		\Cov\left (X_1,X_1X_2\right ) &= \E\left( X_1^2X_2\right) - \E(X_1)\E(X_1X_2) =0,\\
		\Cov(X_2,X_1X_2)&=0,\\
		\Cov\left (X_1,X_1^2\right ) &= \E\left (X_1^3\right ) - \E\left (X_1^2\right )\E\left (X_1\right ) = 0.\\
        \Cov\left (X_2,X_2^2\right ) &= 0\\
        \Cov\left(X_1X_2, X_1^2\right) &= \E\left(X_1^3X_2\right) - \E(X_1X_2)\E\left(X_1^2\right) = 3\E\left(X_1^2\right)\E(X_1X_2) - \E(X_1X_2)\E\left(X_1^2\right) \\
            &= 3\beta - \beta = 2\beta\\
        \Cov\left(X_1X_2, X_2^2\right) &= 2\beta\\
        \Cov\left(X_1^2, X_2^2\right) &= \E\left(X_1^2X_2^2\right) - \E\left(X_1^2\right)\E\left(X_2^2\right) \\
        &= \E\left(X_1^2\right)\E\left(X_2^2\right) +    2\E(X_1X_2)^2 - \E\left(X_1^2\right)\E\left(X_2^2\right) = 2\E(X_1X_2)^2 \\
        &= 2\beta^2.
	\end{align*}
\paragraph{Computation of the Value Functions}
With these expressions, we can now compute
	\begin{align*}
		v(1\cup 2) &= \Var(Y) = \Var(X_1+
  X_2+cX_1X_2) \\
  &= \Var(X_1)  + \Var(X_2) + c^2\Var(X_1X_2) 
				+ 2\Cov(X_1,X_2) + 2c\underbrace{\Cov(X_1, X_1X_2)}_{=0} + 2c\underbrace{\Cov(X_2,X_1X_2)}_{=0}\\
				&= 2+ 2\beta +c^2(1+\beta^2).
	\end{align*} 
Note that $\E(X_2\mid X_1)=\beta X_1$, which is known from the conditional distribution of a multivariate normal. With this identity in mind, we compute
\begin{align*}
	v(1) &= \Var(\E(Y\mid X_1)) = \Var(\E(X_1+X_2+cX_1X_2\mid X_1))\\
 &= \Var(X_1 + \E(X_2\mid X_1) + cX_1\E(X_2\mid X_1))
 = \Var\left (X_1 +\beta X_1+c\beta X_1^2\right )\\
								&=(1+\beta)^2\Var(X_1) + c^2\beta^2 \Var\left (X_1^2\right ) + 2(1+\beta)c\beta\underbrace{\Cov\left (X_1,X_1^2\right )}_{=0}\\
								&= (1+\beta)^2 + 2c^2\beta^2.
\end{align*}
Analogously, 
$$v(2) = (1+\beta)^2 + 2c^2\beta^2.$$
Plugging in $c=\beta=0$ and $c=\sqrt{6}, \beta = 0.5$ from our two DGPs in Example \ref{example:cooperative-forces-cancel-out} and dividing by $\Var(Y)=2+2\beta+c^2(1+\beta^2)$ we receive 
\begin{align*}
    \bar{v}(1\cup 2) = 1, \quad \bar{v}(1)=\frac{1}{2}, \quad \bar{v}(2)=\frac{1}{2}
\end{align*}
as claimed. 
\paragraph{Functional Decomposition}
For the components of the cooperative impact, we first need the functions $g^*_1, g^*_2$ and $h^*$. Note that $Y=f^*(X)$. The component functions are given by
\begin{align*}
    g^*_1(X_1) &= X_1 + \frac{c\beta}{1+\beta^2}X_1^2 - \frac{c\beta(1-\beta^2)}{2(1+\beta^2)}\\
    g^*_2(X_2) &= X_2 + \frac{c\beta}{1+\beta^2}X_2^2 - \frac{c\beta(1-\beta^2)}{2(1+\beta^2)}\\
    h^*(X) &= cX_1X_2 - \frac{c\beta}{1+\beta^2}X_1^2 - \frac{c\beta}{1+\beta^2}X_2^2 + \frac{c\beta(1-\beta^2)}{1+\beta^2}.
\end{align*}
To prove this, we verify $\E(h^*\mid X_1)=\E(h^*\mid X_2)=0$, which is equivalent to $g^*$ being the $\LL^2$-optimal GAM as we showed in Theorem \ref{Thm: GAM residual gives pure interaction}. We first note that 
\begin{align*}
    \E\left(X_2^2\mid X_1\right) = \Var(X_2\mid X_1) + \E(X_2\mid X_1)^2 = 1-\beta^2 + \beta^2X_1^2
\end{align*}
where $\Var(X_2\mid X_1)$ and $\E(X_2\mid X_1)$ are known from the conditional distributions of the multivariate normal distribution. We then compute
\begin{align*}
    \E(h^*\mid X_1) &= cX_1\E(X_2\mid X_1) - \frac{c\beta}{1+\beta^2}X_1^2 - \frac{c\beta}{1+\beta^2}\E\left(X_2^2\mid X_1\right) + \frac{c\beta(1-\beta^2)}{1+\beta^2}\\
     &= c\beta X_1^2 - \frac{c\beta}{1+\beta^2}X_1^2 - \frac{c\beta(1-\beta^2)}{1+\beta^2} - \frac{c\beta^3}{1+\beta^2}X_1^2 + \frac{c\beta(1-\beta^2)}{1+\beta^2}\\ 
     &= c\beta X_1^2 - \frac{c\beta}{1+\beta^2}X_1^2 - \frac{c\beta^3}{1+\beta^2}X_1^2 \\
      &= \frac{c\beta(1+\beta^2) - c\beta}{1+\beta^2}X_1^2 - \frac{c\beta^3}{1+\beta^2}X_1^2\\
      &=\frac{c\beta^3}{1+\beta^2}X_1^2 - \frac{c\beta^3}{1+\beta^2}X_1^2\\
      &=0.
\end{align*}
Analogously, one can compute $\E(h^*\mid X_2)=0$. Hence, $Y-h^*(X)$ is the $\LL^2$-optimal GAM for $Y$. The two components $g^*_1$ and $g^*_2$ are unique up to a constant as we know from Theorem \ref{Thm: uniqueness GAM}. We simply split the additive constant equally between the two. For the computation of the variance they can simply be dropped. 
\paragraph{Decomposition of the Cooperative Impact}
Again taking use of the quantities we computed with the generalized Isserlis theorem, we can now determine the components of the cooperative impact. For the cross-predictability we get
\begin{align*}
    \Var\left (\E\left (g^*_1\mid X_2\right )\right ) 
     &= \Var\left( \E(X_1\mid X_2) + \frac{c\beta}{1+\beta^2} \E\left(X_1^2\mid X_2\right)\right)\\
      &= \Var \left( \beta X_2 + \frac{c\beta}{1+\beta^2}\left(1+\beta^2 + \beta^2X_2^2\right)\right)\\
      &= \Var \left( \beta X_2 + \frac{c\beta}{1+\beta^2}\left(\beta^2X_2^2\right)\right)\\
      &= \beta^2\Var(X_2) + \left(\frac{c\beta^3}{1+\beta^2}\right)^2\Var\left(X_2^2\right) + 2\frac{c\beta^4}{1+\beta^2}\underbrace{\Cov\left(X_2, X_2^2\right)}_{=0}\\
      &= \beta^2 + \frac{2c^2\beta^6}{(1+\beta^2)^2}
\end{align*}
and in the same manner
\begin{align*}
    \Var\left (\E\left (g^*_2\mid X_1\right )\right )  = \beta^2 + \frac{2c^2\beta^6}{(1+\beta^2)^2}.
\end{align*}
Summed up we have
\begin{align*}
    \Var\left (\E\left (g^*_1\mid X_2\right )\right )  + \Var\left (\E\left (g^*_2\mid X_1\right )\right )  = 2\beta^2 + \frac{4c^2\beta^6}{(1+\beta^2)^2}.
\end{align*}
For the covariance we compute
\begin{align*}
    \Cov(g^*_1, g^*_2) &= \Cov\left(X_1 + \frac{c\beta}{1+\beta^2}X_1^2, X_2 + \frac{c\beta}{1+\beta^2}X_2^2\right)\\
    &= \Cov(X_1, X_2) + \frac{c\beta}{1+\beta^2}\underbrace{\Cov\left(X_1, X_2^2\right)}_{=0}\\ 
    &\quad+ \frac{c\beta}{1+\beta^2} \underbrace{\Cov\left(X_2, X_1^2\right)}_{=0} + \left(\frac{c\beta}{1+\beta^2}\right)^2\Cov\left(X_1^2, X_2^2\right)\\
    &=\beta + \frac{2c^2\beta^4}{(1+\beta^2)^2}.
\end{align*}
Eventually, for the interaction surplus we have 
\begin{align*}
    \Var(h^*) &= \Var\left(cX_1X_2 - \frac{c\beta}{1+\beta^2}X_1^2 - \frac{c\beta}{1+\beta^2}X_2^2\right)\\
    &= c^2\Var(X_1X_2) + \frac{c^2\beta^2}{(1+\beta^2)^2}\Var\left(X_1^2\right) + \frac{c^2\beta^2}{(1+\beta^2)^2}\Var\left(X_2^2\right) \\
    &\quad- 2\frac{c^2\beta}{1+\beta^2}\Cov\left(X_1X_2, X_1^2\right) - 2\frac{c^2\beta}{1+\beta^2}\Cov\left(X_1X_2, X_2^2\right) \\
    &\quad +2\frac{c^2\beta^2}{(1+\beta^2)^2}\Cov\left(X_1^2, X_2^2\right)\\
    &= c^2\left(1+\beta^2\right) + 4\frac{c^2\beta^2}{(1+\beta^2)^2} -8\frac{c^2\beta^2}{1+\beta^2} + 4 \frac{c^2\beta^4}{(1+\beta^2)^2}\\
    &= c^2\left(1+\beta^2\right) + 4\frac{c^2\beta^2\left(1+\beta^2\right)}{(1+\beta^2)^2} - 8\frac{c^2\beta^2}{1+\beta^2}\\
    &= c^2\left(1+\beta^2\right) - 4\frac{c^2\beta^2}{1+\beta^2}.
\end{align*}
Plugging in the values for $c$ and $\beta$ delivers the values of the DIP decomposition. Those can be normalized by dividing by $\Var(Y)=2+2\beta+c^2(1+\beta^2)$ to receive the values presented in the paper.
\subsubsection{Main Effect Dependencies Cannot Be Derived by Only Value Functions and Feature Correlation}\label{app:examples:correlation vs main effect dependencies}
In the following, we give an example of three two-dimensional DGPs, where $X$ follows the same distribution in each of the three and the value functions for all subsets of features coincide. However, we get different DIP decompositions for each DGP, illustrating that we cannot deduce the main effect dependencies from just the dependencies of the features, even if all the value functions are known.

Consider the following DGPs:
\begin{description}
    \item[DGP 1:] 
    \begin{align*}
        X &\sim \N\left(0, \left(\begin{array}{cc}1 & 0.5 \\ 0.5 & 1\end{array}\right)\right)\\
        Y &= -4.3X_1 -0.9X_2 - 3.9X_1^2 + 3.0X_2^2
    \end{align*}
    \item[DGP 2:]
    \begin{align*}
        X &\sim \N\left(0, \left(\begin{array}{cc}1 & 0.5 \\ 0.5 & 1\end{array}\right)\right)\\
        Y &= -1.3X_1 - 4.7X_2 + 3.6X_1^2 - 3.0X_2^2 + 4.7X_1X_2
    \end{align*}
    \item[DGP 3:]
    \begin{align*}
        X &\sim \N\left(0, \left(\begin{array}{cc}1 & 0.5 \\ 0.5 & 1\end{array}\right)\right)\\
        Y &= 10.9X_1 + 2.4X_2 -5.1X_1^2 - 5.3X_2^2 + 11.3X_1X_2.
    \end{align*}    
\end{description}
In each of the three cases, we get the (empirical and rounded) normalized value functions 
$$\bar{v}(1)=0.7,\quad \bar{v}(2)=0.3,\quad \bar{v}(1\cup 2) = 1.$$
Despite the coinciding value functions and the same correlation, we get three different DIP decompositions as the following plot shows.
\begin{figure}[H]
    \centering
    \includegraphics[width=0.8\linewidth]{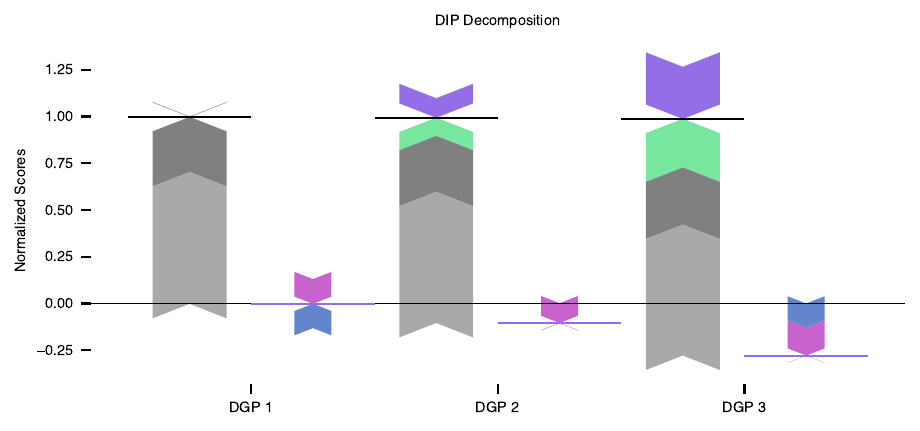}
    \caption{For each example, we show a forceplot visualizing the DIP decomposition into standalone contributions \textcolor{v1gray}{($v(1)$} and \textcolor{v2gray}{$v(2)$}), \textcolor{dependence-purple}{main effect dependencies ($\mathrm{Dep}(1,2)$)} and \textcolor{interaction-green}{interaction suplus}, where the direction of each bar (upward or downward) represents the sign. They sum up to $v(1,2)$ (black horizontal line). The slim bars (right) show the decomposition of \textcolor{dependence-purple}{$\mathrm{Dep}(1,2)$} (purple horizontal line) into \textcolor{cov-purple}{covariance} and \textcolor{cross-pred-purple}{cross-predictability}.}
    \label{fig: 3DGPs same correlation different main effect dependencies}
\end{figure}
\subsubsection{Vanishing Main Effect Dependencies Despite Variable Dependence}\label{app:examples: zero main order dependencies}
In the following, we show an example of a two-dimensional DGP, where the two features are strongly correlated, but their main effect dependencies vanish. 

Let $Z_0, Z_1, Z_2\sim \mathrm{Unif}(0,...,9)$ be independently distributed. We define 
\begin{align*}
X_1 &= 10Z_0 + Z_1\\
X_2 &= 10Z_0 + Z_2\\
Y &= (X_1\bmod 10) + (X_2\bmod 10)  = Z_1 + Z_2,
\end{align*}
that is, $Z_0$ defines the first digit and $Z_1, Z_2$ define the second digit of $X_1$ and $X_2$. The target variable $Y$ is just the sum of their last digits.\\
Clearly, $X_1$ and $X_2$ are strongly correlated due to $Z_0$. However, we have 
\begin{align*}
g_1(X_1) &= X_1 \bmod 10 = Z_1\\
g_2(X_2) &= X_2 \bmod 10 = Z_2
\end{align*}
and therefore $\E(g_1\mid X_2) = \E(Z_1\mid 10Z_0+Z_2)= \E(Z_1)$, $\E(g_2\mid X_1)=\E(Z_2\mid 10Z_0+Z_1)= \E(Z_2)$. Hence, the cross-predictability
$$\Var(\E(g_1\mid X_2)) + \Var(\E(g_2\mid X_2)) = 0$$
as well as the main effect covariance 
$$2\Cov(g_1, g_2) = \Cov(Z_1, Z_2) = 0$$
vanish.

Here, we see an example where all of the dependencies between $X_1$ and $X_2$ stem from their shared first digit $Z_0$, which is independent from the target $Y$. All of their information about $Y$ is contained in their last digits $Z_1,Z_2$, which are independent. This means that although our variables have a strong dependencies, they do not share information about the target, so their main effects dependencies vanish. This once more illustrates that main order dependencies cannot be determined by just the correlation, rather, they measure the dependencies of only those parts of our features that are relevant for predicting. 
\subsubsection{Binary Examples \ref{Example: student redundancy} - \ref{example: binary interaction}}\label{app:examples:student examples}
\begin{figure}[H]
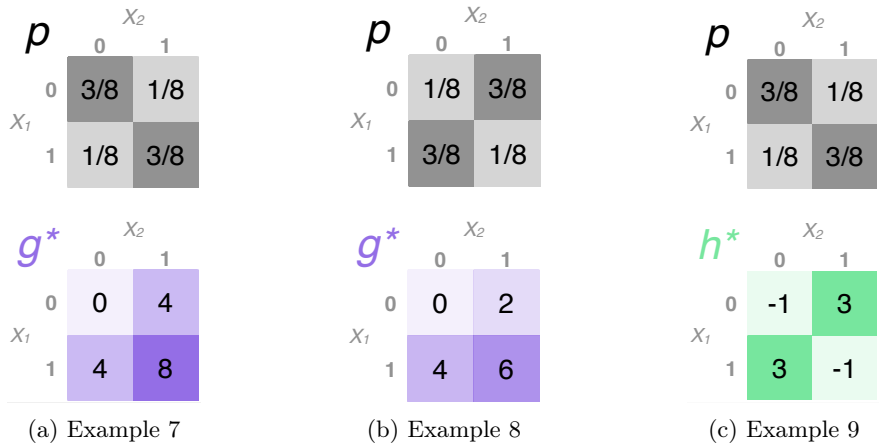

  \centering
    \begin{subfigure}{0.15\linewidth}
      \centering
        \includegraphics[width=\linewidth]{img/illustration_terms/DGP1.pdf}
        \caption{Example \ref{Example: student redundancy}}
    \end{subfigure}
    \hspace{0.1\linewidth}
    \begin{subfigure}{0.15\linewidth}
      \centering
        \includegraphics[width=\linewidth]{img/illustration_terms/DGP2.pdf}
        \caption{Example \ref{example: student enhancement}}
    \end{subfigure}
    \hspace{0.1\linewidth}
    \begin{subfigure}{0.15\linewidth}
      \centering
        \includegraphics[width=\linewidth]{img/illustration_terms/DGP3.pdf}
        \caption{Example \ref{example: binary interaction}}
    \end{subfigure}
    \caption{Visualizing the data generating process of the three illustrative student examples.}
    \label{fig:binary-examples-DGPs}
\end{figure}
\paragraph{Example \ref{Example: student redundancy}}
Since $Y=4X_1+4X_2$ can exactly been determined by a GAM, we have $f^*(X)=g^*(X)=4X_1+4X_2$ and $h^*=0$. Note that 
\begin{align*}
    \E(X_2\mid X_1=1) &= \frac{3}{4},\\
    \E(X_2\mid X_1=0) &= \frac{1}{4}.
\end{align*}
We can also write this as $\E(X_2\mid X_1)=0.5X_1+0.25$. In the same way, $\E(X_1\mid X_2)=0.5X_2+0.25$.\\
The best univariate predictors are given by
\begin{align*}
    \E(Y\mid X_1) &= \E(4X_1 + 4X_2\mid X_1) = 4X_1 + 4(0.5X_1+0.25) = 6X_1 + 1,\\
    \E(Y\mid X_2) &= 6X_2 + 1.
\end{align*}
The variance of a $\mathrm{Ber}(p)$-distributed variable is given by $p(1-p)$, so $\Var(X_i)=0.25$ in our case. This leads to
\begin{align*}
    v(1) &= \Var(\E(Y\mid X_1)) = \Var(6X_1+1) = 36\Var(X_1) = 9\\
    v(2) &= 9\\
    v(1\cup 2) &= \Var(Y) = \E\left(Y^2\right)-\E(Y)^2 = \frac{3}{8}\cdot 8^2 + 2\cdot \frac{1}{8}\cdot 4^2 - 4^2= 12. 
\end{align*}
This leads to main effect dependencies of $6$ because there is no interaction surplus. For main effect covariance we get
\begin{align*}
    2\Cov(4X_1,4X_2) = 32\Cov(X_1,X_2) = 32(\E(X_1X_2)-\E(X_1)\E(X_2))= 32\cdot\left(\frac{3}{8} - \frac{1}{4}\right) = 32\cdot \frac{1}{8} = 4,
\end{align*}
leading to a cross-predictability of $6-4=2$.
\paragraph{Example \ref{example: student enhancement}}
Again, $f^*(X)=g^*(X)=4X_1+2X_2$ and $h^*=0$. This time,
\begin{align*}
    \E(X_2\mid X_1=1) &= \frac{1}{4},\\
    \E(X_2\mid X_1=0) &= \frac{3}{4},
\end{align*}
which means $\E(X_2\mid X_1)=-0.5X_1 +0.25$ and $\E(X_1\mid X_2)=-0.5X_2+0.25$. For the univariate predictors we compute
\begin{align*}
    \E(Y\mid X_1) &= \E(4X_1+2X_2\mid X_1) = 4X_1 + 2(-0.5X_1+0.25) =3X_1 + 0.5.\\
    \E(Y\mid X_2) &= \E(4X_1 + 2X_2\mid X_2)  = 4\cdot (-0.5X_2 + 0.25) + 2X_2 = 1,
\end{align*}
leading to 
\begin{align*}
    v(1) &= \Var(\E(Y\mid X_1)) = \Var(3X_1+0.5) = 9\Var(X_1) = 2.25\\
    v(2) &= \Var(1) = 0\\
    v(1\cup 2) &= \Var(Y) = \E\left(Y^2\right)-\E(Y)^2 = \frac{1}{8}\cdot 8^2 + \frac{3}{8}\cdot 4^2 + \frac{3}{8}\cdot 2^2 - 3^2= 3. 
\end{align*}
This implies $\mathrm{Dep}(1,2) = -0.75$ because again, the interaction surplus vanishes. The main effect covariance is given by
\begin{align*}
    2\Cov(4X_1, 2X_2) = 16\Cov(X_1,X_2) = 16(\E(X_1X_2) - \E(X_1)\E(X_2)) = 16\cdot \left(\frac{1}{8} - \frac{1}{4}\right) = -2,
\end{align*}
so the cross-predictability is $-0.75+2=1.25$.
\paragraph{Example \ref{example: binary interaction}}
Note that the distribution of $X$ is the same as in Example \ref{Example: student redundancy}. We first prove that the decomposition $Y=g^*(X)+h^*(X)$ for  $g^*(X)=4X_1+4X_2$ and $h^*(X)=4(X_1\oplus X_2) - 1$ is indeed the unique functional decomposition into a GAM and a pure interaction. According to Theorem \ref{Thm: GAM residual gives pure interaction} it is sufficient to prove $\E(h^*\mid X_1)=\E(h^*\mid X_2)=0$. We get
\begin{align*}
    \E(4(X_1\oplus X_2)-1\mid X_1=1) &= \underbrace{P(X_2=1\mid X_1=1)}_{=\frac{3}{4}} \cdot (0 - 1) + \underbrace{P(X_2=0\mid X_1=1)}_{=\frac{1}{4}}\cdot(4 - 1) = 0  \\
    \E(4(X_1\oplus X_2)-1\mid X_1=0) &= \underbrace{P(X_2=1\mid X_1=0)}_{=\frac{1}{4}} \cdot (4 - 1) + \underbrace{P(X_2=0\mid X_1=0)}_{=\frac{3}{4}}\cdot(0 - 1) = 0.
\end{align*}
So, it holds $\E(h^*\mid X_1)=0$ and due to the symmetry also $\E(h^*\mid X_2)=0$. In particular, $\E(h^*) = \E(\E(h^*\mid X_1))=0$, so $h^*$ is centered. The univariate predictor yields
\begin{align*}
    \E(Y\mid X_1) = \E(g^*\mid X_1) + \underbrace{\E(h^*\mid X_1)}_{=0} = \E(4X_1+4X_2\mid X_1),
\end{align*}
likewise for $X_2$. This means, the univariate predictors are the same as in Example \ref{Example: student redundancy}, from which we may conclude that also $v(1)=v(2)=9$. Since $g^*$ as well coincides with the one in Example \ref{Example: student redundancy}, we can also conclude that $\mathrm{Dep}(1,2)=6$, the cross-predictability equals 2 and the main effects covariance equals 4.  For the interaction surplus we compute
$$\Var(h^*)= \E\left(h^2\right)-\underbrace{\E(h)^2}_{=0} = \frac{3}{4}\cdot (-1)^2 + \frac{1}{4}\cdot 3^2 = 3. $$
From this, we right away conclude 
$$v(1\cup 2) = v(1)+v(2) + \Var(h^*) - \mathrm{Dep}(1,2) = 15.$$

\subsection{Decomposition of SAGE}\label{app: shapley effects}
The SAGE value \citep{covert2020understanding}, also known as Shapley effect \citep{song2016shapley} of a feature $j$ is defined as
\begin{align}\label{eq: app: shap}
    \Phi_j := \sum_{S\subseteq D\setminus j} c_S\cdot (v(S\cup j) - v(S)),
\end{align}
where the weights $c_S$ are given by
$$c_S= \frac{(d-|S|-1)!|S|!}{d!}.$$
Note that $\sum_{S\subseteq D\setminus j}c_S = 1$. In the following, we write $\mathrm{Int}(j, S)$ for the interaction surplus and $\mathrm{Dep}(j,S)$ for the main order dependencies between the feature groups $j$ and $S$. By decomposing every summand of \eqref{eq: app: shap} we receive
\begin{align*}
    \Phi_j :&= \sum_{S\subseteq D\setminus j} c_S\cdot (v(S\cup j) - v(S)) = \sum_{S\subseteq D\setminus j} c_S\cdot (v(j) + \mathrm{Int}(j, S) - \mathrm{Dep}(j,S)) \\
    &= v(j) + \sum_{S\subseteq D\setminus j} c_S\cdot \mathrm{Int}(j, S) - \sum_{S\subseteq D\setminus j} c_S\cdot \mathrm{Dep}(j,S),
\end{align*}
yielding a decomposition of $\Phi_j$ into its standalone contribution, the contribution stemming from interactions and the one stemming from dependencies. Here, instead of only computing the interaction surplus and the main order dependencies between $j$ and $\bar{j}$, we do so for $j$ and every subset $S\subseteq D\setminus j$ and compute a weighted average, just as in the formula for the Shapley effect.\\
One disadvantage of the Shapley effect over LOCO is the exponential runtime in the number of features since we need to refit a model for every $S\subseteq D\setminus j$ or at least use a sufficiently large number of subsets for an approximation.

\newpage
\section{DETAILS ON ESTIMATION AND IMPLEMENTATION}
\label{appendix:estimation}
\subsection{Estimation}
\label{appendix:estimation-test-train}

In Theorem \ref{thm: decomposition for groups} we derived a decomposition of the cooperative impact, assuming the optimal models $f^*$ and $g^*$ and the true data distribution $(X,Y)\sim P$ to be known. In practice, the optimal models are not available; Instead we fit and evaluate ML models using some dataset $\left ( (x,y)^{(1)}, \dots, (x, y)^{(n)} \right )$ with indices $\mathcal{I} = \{1, \dots, n\}$.\\
To avoid bias due to overfitting, we reformulate the quantities such that they can be estimated using test set performance. We start by reformulating interaction surplus and main effect dependencies in terms of value functions.

\paragraph{Interaction Surplus And Main Effect Dependencies In Terms of Value Functions}

First, we recall that we defined the interaction surplus for optimal model $f^*$ and optimal GGAM $g^*$ as $\Var(h^*)$ where $h^* := f^* - g^*$. In Lemma \ref{lemma:estimating-pure-interaction-from-data}, we show that we can equivalently define the interaction surplus, in the following denoted as $\mathrm{Int} \left ( J, \bar{J} \right )$, as the difference in value function between the full model $f^*$ and the GGAM $g^*$.
\begin{Lem}
\label{lemma:estimating-pure-interaction-from-data}
Let $(X,Y)\sim P$ be a DGP, $J\subseteq D$ a subset of features, $f^*$ the $\LL^2$-optimal predictor and $g^*=g^*_J+g^*_{\bar{J}}$ the $\LL^2$-optimal GGAM in $X_J$ and $X_{\bar{J}}$. We call $h^*=f^*-g^*$. Then, the interaction surplus $\mathrm{Int}\left(J,\bar{J}\right) := Var(h^*)$ is given by
$$\mathrm{Int}\left(J,\bar{J}\right)= v_{f^*}\left(J \cup \bar{J}\right) - v_{g^*}\left(J \cup \bar{J}\right).$$
\end{Lem}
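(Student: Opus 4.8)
The plan is to unfold the two value functions appearing in the claimed identity at the full feature set $S = J \cup \bar{J} = D$ and simplify. First I would observe that for $S = D$ the restricted function is just the function itself, since $f_D(x) = \E(f(X) \mid X_D = x) = f(x)$; hence $v_{f^*}(D) = \E\bigl((\E(f^*) - Y)^2\bigr) - \E\bigl((f^* - Y)^2\bigr)$ and likewise $v_{g^*}(D) = \E\bigl((\E(g^*) - Y)^2\bigr) - \E\bigl((g^* - Y)^2\bigr)$. Next I would note that $f^*$ and $g^*$ have the same mean as $Y$: for $f^* = \E(Y \mid X)$ this is the tower property, and for $g^*$ it follows from Lemma \ref{app: Lem: tower property}, which gives that $g^*$ is the $\LL^2$-optimal predictor of $Y$ within the GGAMs, so its residual $Y - g^*$ is orthogonal to the constants (which are GGAMs), i.e. $\E(g^*) = \E(Y)$. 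Consequently the two ``$\emptyset$-terms'' both equal $\Var(Y)$ and cancel, leaving
\[ v_{f^*}(D) - v_{g^*}(D) = \E\bigl((g^* - Y)^2\bigr) - \E\bigl((f^* - Y)^2\bigr). \]

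The heart of the argument is then a Pythagorean-type identity. I would substitute $g^* = f^* - h^*$ and expand
\[ \E\bigl((g^* - Y)^2\bigr) = \E\bigl((f^* - Y)^2\bigr) - 2\,\E\bigl(h^*\,(f^* - Y)\bigr) + \E\bigl((h^*)^2\bigr). \]
The cross term vanishes because $f^* = \E(Y \mid X)$ is the $\LL^2$-optimal predictor, so the residual $Y - f^*$ is orthogonal to every $\LL^2$-function of $X$, and $h^* = f^* - g^*$ is such a function. This yields $\E\bigl((g^*-Y)^2\bigr) - \E\bigl((f^*-Y)^2\bigr) = \E\bigl((h^*)^2\bigr)$. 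Finally, since $\E(h^*) = 0$ — which follows, e.g., from $\E(h^* \mid X_J) = 0$ via Theorem \ref{Thm: GAM residual gives pure interaction} — we have $\E\bigl((h^*)^2\bigr) = \Var(h^*) = \mathrm{Int}\bigl(J, \bar{J}\bigr)$, which is exactly the claim.

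The computation is short, so there is no deep obstacle; the one place that needs care is the bookkeeping of the constant/mean terms, namely confirming that $v_{g^*}(\emptyset)$ and $v_{f^*}(\emptyset)$ genuinely coincide. This reduces to $\E(g^*) = \E(Y)$, which must be justified by the optimality of $g^*$ among GGAMs rather than by treating $g^*$ as an arbitrary model. (It is in fact robust to whether one defines $f_\emptyset$ as $\E(f(X))$ or as $\E(Y)$, since $\E(f^*(X)) = \E(g^*(X)) = \E(Y)$ either way.) An equivalent route that sidesteps this is to first record that $v_f(D) = \Var(Y) - \E\bigl((f(X) - Y)^2\bigr)$ whenever $\E(f(X)) = \E(Y)$, apply it to both $f = f^*$ and $f = g^*$, and then only the orthogonality step remains.
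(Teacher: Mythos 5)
Your proposal is correct and follows essentially the same route as the paper's proof: rewrite the difference of value functions as $\E\bigl((Y-g^*)^2\bigr) - \E\bigl((Y-f^*)^2\bigr)$, expand using $g^* = f^* - h^*$, kill the cross term by the orthogonality of the residual $Y - f^*$ to all $\LL^2$-functions of $X$, and use $\E(h^*) = 0$ (from Theorem \ref{Thm: GAM residual gives pure interaction}) to identify $\E\bigl((h^*)^2\bigr)$ with $\Var(h^*)$. Your additional bookkeeping of the $\emptyset$-terms, via $\E(g^*) = \E(f^*) = \E(Y)$, only makes explicit a cancellation the paper's proof treats implicitly.
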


\begin{proof}
    We compute
    \begin{align*}
        v_{f^*}\left(J \cup \bar{J}\right) - v_{g^*}\left(J \cup \bar{J}\right) &= \E\left((Y-g^*)^2\right) - \E\left((Y-f^*)^2\right)\\
        &= \E\left((Y-f^*+h^*)^2\right) - \E\left((Y-f^*)^2\right)\\
        &= \E\left((Y-f^*)^2\right) + 2\underbrace{\E\left((Y-f^*)\cdot h^*\right)}_{=0} + \E\left((h^*)^2\right) - \E\left((Y-f^*)^2\right)\\
        &=\Var(h^*).
    \end{align*}
    Here, $\E\left((Y-f^*)\cdot h^*\right)$ vanishes because $f^*$ is the optimal predictor, and its residual is hence perpendicular to every function in $\LL^2\left(\R^d,P\right)$ \citep{luenberger1997optimization}. Furthermore, $h^*$ is mean-centered due to Theorem \ref{Thm: GAM residual gives pure interaction} and thus $\E\left(h^2\right)=\Var\left(h^2\right)$.
\end{proof}

\paragraph{Estimaton of $\mathrm{Dep}$ and $\mathrm{Int}$}

As a consequence of Lemma \ref{lemma:estimating-pure-interaction-from-data}, we can estimate the interaction surplus for some $\hat{f}$, $\hat{g}$ and observation indices $\mathcal{I}$ as the difference in estimated value functions. Therefore, we first recapitulate that the definition the value function for the $\LL^2$-loss and a model $f$ is given by
$$v_{f, \LL^2}(S) := E\left((f_\emptyset - Y)^2\right) - E\left((f_S(X_S) - Y)^2\right),$$
which can be estimated in terms of test data with indices $\mathcal{I}_{te}$ using the empirical risk by
\begin{align*}
    \hat{v}^{\mathcal{I}_{te}}_{f, \LL^2}(S) = \frac{1}{|\mathcal{I}_{te}|} \left (\sum_{i \in \mathcal{I}_{te}} \left (f_\emptyset - y^{(i)} \right )^2 - \sum_{i \in \mathcal{I}_{te}} \left (f_S\left(x^{(i)}\right) - y^{(i)} \right )^2 \right ).
\end{align*}
We recall that $f_S$ can be obtained using refitting or via conditional integration. The term $f_\emptyset$ is the best constant approximation of $f$, which is $\E(f)$. For an optimal predictor, this is the same as $\E(Y)$. Both of these terms can be approximated by the empirical mean. To avoid biased results, all models must be fit on training data with indices $\mathcal{I}_{tr}$ where $\mathcal{I}_{tr} \cap \mathcal{I}_{te} = \emptyset$.\\
Based on the empirical value function, we can estimate the interaction surplus for some test set $\mathcal{I}_{te}$ as
\begin{align*}
    \widehat{\mathrm{Int}}^{\mathcal{I}_{te}}_{\hat{f}, \hat{g}} \left( J,\bar{J} \right) := \hat{v}_{\hat{f}, \LL^2}^{\mathcal{I}_{te}}\left(J\cup \bar{J}\right) - \hat{v}_{\hat{g}, \LL^2}^{\mathcal{I}_{te}} \left(J\cup\bar{J}\right).
\end{align*}
To estimate $\mathrm{Dep}\left (J, \bar{J} \right )$, we recall that $\Psi\left(J, \bar{J}\right) 
        =\textcolor{interaction-green}{\Var(h^*)} - \textcolor{dependence-purple}{\mathrm{Dep}\left(J,\bar{J}\right)}$, and thus $\mathrm{Dep} \left (J, \bar{J} \right ) := \Psi\left (J, \bar{J} \right ) - \mathrm{Int}\left (J, \bar{J} \right )$. We can estimate $\Psi(J, \bar{J})$ as 
\begin{align*}
    \widehat{\Psi}^{\mathcal{I}_{te}}_{\hat{f}} \left (J, \bar{J} \right ) = \hat{v}_{\hat{f}, \LL^2}^{\mathcal{I}_{te}} \left (J\cup \bar{J} \right ) - \hat{v}_{\hat{f}, \LL^2}^{\mathcal{I}_{te}} \left (J \right ) - \hat{v}_{\hat{f}, \LL^2}^{\mathcal{I}_{tr}} \left (\bar{J} \right )
\end{align*}
and get 
\begin{align*}
    \widehat{\mathrm{Dep}}_{\hat{f}, \hat{g}}^{\mathcal{I}_{te}} \left (J, \bar{J} \right ) := \widehat{\Psi}_{\hat{f}}^{\mathcal{I}_{te}} \left (J, \bar{J} \right ) - \widehat{\mathrm{Int}}_{\hat{f}, \hat{g}}^{\mathcal{I}_{te}} \left (J, \bar{J} \right ).
\end{align*}
We summarize the estimation procedure in Algorithm \ref{alg:estimation-dep-int}.
\begin{algorithm}[h]
\caption{Estimation of $\textcolor{interaction-green}{\mathrm{Int}}$ and $\textcolor{dependence-purple}{\mathrm{Dep}}$ Using Empirical Risk on Test Data}\label{alg:estimation-dep-int}
\begin{algorithmic}[1]
    \Input Feature indices $J$, data $(x, y)^{\mathcal{I}}$, split into train and test $\mathcal{I}^{tr} \cup \mathcal{I}^{te} = \mathcal{I}$, model $\hat{f}$.  
    \Output Estimates of \textcolor{interaction-green}{interaction surplus $\mathrm{Int} \left (J, \bar{J} \right )$} and \textcolor{dependence-purple}{main effect dependencies $\mathrm{Dep} \left ( J, \bar{J} \right )$}.
    \State $\hat{f}_J, \hat{f}_{\bar{J}}, \hat{f}_\emptyset \gets$ $\LL^2$ fits on $(x_S,y)^{\mathcal{I}_{tr}}$ for $S=J$, $S=\bar{J}$, and $S=\emptyset$.
    \State $\hat{g} \gets$ $\LL^2$ fit of GGAM in $J, \bar{J}$ on $(x,y)^{\mathcal{I}_{tr}}$
    \State $\textcolor{interaction-green}{\widehat{\mathrm{Int}}_{\hat{f}, \hat{g}}^{\mathcal{I}_{te}} \left (J, \bar{J} \right)}$ $\gets \hat{v}_{\hat{f}, \LL^2}^{\mathcal{I}_{te}} \left ( J, \bar{J} \right ) - \hat{v}_{\hat{g}, \LL^2}^{\mathcal{I}_{te}} \left ( J, \bar{J} \right )$
    \State $\widehat{\Psi}^{\mathcal{I}_{te}}_{\hat{f}} \left (J, \bar{J} \right ) \gets \hat{v}_{\hat{f}, \LL^2}^{\mathcal{I}_{te}} \left (J, \bar{J} \right ) - \hat{v}_{\hat{f}, \LL^2}^{\mathcal{I}_{te}} \left (J \right ) - \hat{v}_{\hat{f}, \LL^2}^{\mathcal{I}_{te}} \left (\bar{J} \right )$ \Comment{uses $\hat{f}_J, \hat{f}_{\bar{J}}, \hat{f}_{\emptyset}$}
    \State $\textcolor{dependence-purple}{\widehat{\mathrm{Dep}}^{\mathcal{I}_{te}}_{\hat{f}, \hat{g}}  \left (J, \bar{J} \right )} \gets \widehat{\Psi}^{\mathcal{I}_{te}}_{\hat{f}} \left (J, \bar{J} \right ) - \textcolor{interaction-green}{\widehat{\mathrm{Int}}_{\hat{f}, \hat{g}}^{\mathcal{I}_{te}} \left (J, \bar{J} \right)}$
\end{algorithmic}
\end{algorithm}
\paragraph{Estimation of Main Effect Cross-Predictability and Covariance}{
We recall that $\mathrm{Dep}\left (J, \bar{J} \right )$ is the sum of cross-predictability and main effect covariance
\begin{align*}
\textcolor{dependence-purple}{\mathrm{Dep}\left(J,\bar{J}\right)}  
        := \textcolor{cross-pred-purple}{
                   \underbrace{\Var\left (\E\left (g^*_J\mid X_{\bar{J}}\right )\right ) + \Var\left (\E\left (g^*_{\bar{J}}\mid X_J\right )\right) 
                   }_{\text{Cross-Predictability $\mathrm{CP}\left (J, \bar{J} \right)$}}}
        + \textcolor{cov-purple}{
                   \underbrace{2\Cov\left (g^*_J, g^*_{\bar{J}}\right )
                   }_{\text{Covariance $\mathrm{CO}\left (J, \bar{J} \right)$}}}.    
\end{align*}
Given an estimate of $\mathrm{Dep}\left ( J, \bar{J} \right )$, we can now either estimate the cross-predictabilty $\mathrm{CP}\left (J, \bar{J} \right)$ or the covariance $\mathrm{CO}\left (J, \bar{J} \right)$ and get the respective other term as the difference of both.\\
In our experiments, we estimate the covariance of the GGAM components, which is efficient to compute, since the GGAM components are readily available, and the covariance is a comparatively cheap computation (Algorithm \ref{alg:estimation-cross-pred-cov}). On the other hand, estimating the cross-predictability is also possible but requires two further fits for approximating $\E(\hat{g}_J\mid X_{\bar{J}})$ and $\E(\hat{g}_{\bar{J}}\mid X_J)$.
}

\begin{algorithm}[h]
\caption{Estimation of \textcolor{cross-pred-purple}{cross-predictability} and \textcolor{cov-purple}{covariance}}\label{alg:estimation-cross-pred-cov}
\begin{algorithmic}[1]
    \Input $\textcolor{dependence-purple}{\widehat{\text{Dep}}^{\mathcal{I}_{te}}_{\hat{f}, \hat{g}}}  \left (J, \bar{J} \right )$, GGAM $\hat{g} = \hat{g}_J + \hat{g}_{\bar{J}}$ fitted on $(x, y)^{\mathcal{I}_{tr}}$ to minimize $\LL^2$
    \Output Estimates of \textcolor{cross-pred-purple}{cross-predictability $\mathrm{CP}\left (J, \bar{J} \right )$} and \textcolor{cov-purple}{covariance $\mathrm{CO}\left (J, \bar{J} \right )$}
    \State $\textcolor{cov-purple}{\widehat{\mathrm{CO}}_{\hat{f}, \hat{g}}^{\mathcal{I}_{te}}} \left (J, \bar{J} \right ) \gets 2\Cov \left (\hat{g}_J\left(x^{\mathcal{I}_{te}}\right), \hat{g}_{\bar{J}}\left(x^{\mathcal{I}_{tr}}\right) \right)$
    \State $\textcolor{cross-pred-purple}{\widehat{\mathrm{CP}}_{\hat{f}, \hat{g}}^{\mathcal{I}_{te}}} \left (J, \bar{J} \right ) \gets \textcolor{dependence-purple}{\widehat{\text{Dep}}_{\hat{f}, \hat{g}}^{\mathcal{I}_{te}} \left (J, \bar{J} \right )} - \textcolor{cov-purple}{\widehat{\mathrm{CO}}_{\hat{f}, \hat{g}}^{\mathcal{I}_{te}} \left (J, \bar{J} \right )}$ 
\end{algorithmic}
\end{algorithm}

\subsection{Implementation and Code}
\label{appendix:estimation-implementation}

\paragraph{Python Package} 
{We implemented the method in a python package called \texttt{dipd}. The package and installation instructions are available via \url{https://github.com/gcskoenig/dipd}.\\
For the GGAMs, we rely on the \texttt{interpretML} package that implements so-called explainable boosting machines \citep{nori2019interpretml}. Furthermore, we use \texttt{numpy}, \texttt{pandas}, \texttt{matplotlib}, \texttt{seaborn}, \texttt{tqdm}, \texttt{scipy}, \texttt{statsmodels}, and \texttt{scikit-learn} in the most current version available in \texttt{python3.11.7} (a full list of the installed packages including version can be found in the linked repository).\\
In all experiments involving the explainable boosting machine, we fit the model using the default hyperparameters, except that we specify which interventions are and are not allowed. In cases where we use a linear model as GAM (or GGAM), we use the OLS implementation in the \texttt{statsmodels} package, also using default hyperparameters.}
\paragraph{Experiments}
{To reproduce our experiments, we refer to the instructions in our repository (\url{https://github.com/gcskoenig/dipd-experiments}). As follows, we summarize the key parameters for each of the experiments, that is, which models were used, how much data was used, and how we split test and training data.\\
For the student examples (Example \ref{Example: student redundancy} to \ref{example: binary interaction}), we sampled $10^5$ observations and randomly split the dataset into 80\% training and 20\% test data. For the first two examples, we use a linear model as GAM; for the interaction example, the explainable boosting machine.\\
For Example \ref{example:cooperative-forces-cancel-out}, where the cooperative forces cancel out, we sample $10^5$ data points, again hold out 20 \% of the data for testing, and leverage explainable boosting machines for all model fits.\\
For the real-world applications, we split the data into $10$ folds, train on the respective training, and compute the scores on the test data. For all model fits, we leverage the explainable boosting machines.\\
For the example introduced in Appendix \ref{app:examples:correlation vs main effect dependencies} we sampled $10^6$ observations and used explainable boosting machines for all model fits.\\
In Appendix \ref{appendix:more-experimental-results} we present additional experimental results. In all cases we use test train splits with $20$\% test data and rely on explainable boosting machines.
}
\paragraph{Compute}
{The experiments were run on a MacBook Pro with M3 Pro Chip. The illustrative examples all ran in less than one minute, the DIP decompositions of LOCO on the housing and wine dataset took about ten minutes each. The DIP decompositions of SAGE on the wine and housing datasets (Appendix \ref{appendix:additional-results}) took about one and a half hours each.}

\newpage
\section{EXPERIMENTAL RESULTS}
\label{appendix:more-experimental-results}

In the main paper, we decomposed the LOCO scores on the wine quality dataset using DIP (Section \ref{section: experiments}, Figure \ref{fig:experiments-results} left). In this section, we present additional experimental results. In Section \ref{appendix:detailed-results}, we investigate the pairwise relationships of variables using DIP. In Appendix \ref{appendix:additional-results}, we additionally present the results of a DIP decomposition of SAGE values. The code for all presented experiments is availble in the repository accompanying the paper (and \url{https://github.com/gcskoenig/dipd-experiments}).\\
For more detailed results regarding the experiments in the main paper (such as standard deviations or the results for individual folds) we refer to the repository accompanying the paper.\\

\subsection{Detailed Analysis on the Wine Quality Dataset}
\label{appendix:detailed-results}

\subsubsection{Pairwise Decompositions Reveal Cooperation Partners} \label{appendix:pairwise-wine-quality}

\begin{figure}[h]
    \centering
    \hfill
    \begin{subfigure}{0.49 \textwidth}
        \centering
        \includegraphics[width=0.99 \linewidth]{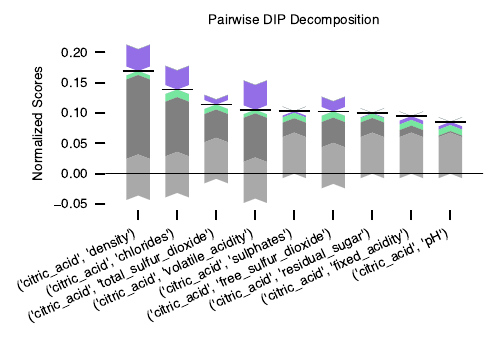}
        \caption{Citric Acidity}
        \label{fig:pairwise-wine-quality-citric-acidity}    
    \end{subfigure}
    \hfill
        \begin{subfigure}{0.49 \textwidth}
        \centering
        \includegraphics[width=0.99 \linewidth]{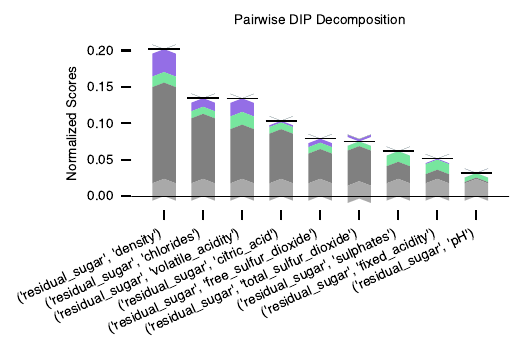}
        \caption{Residual Sugar}
        \label{fig:pairwise-wine-quality-sugar}    
    \end{subfigure} 
    \hfill
    \caption{Pairwise DIP decomposition on the Wine Quality Dataset. The two gray bars indicate the standalone contributions of each of the features in the pair, the black horizontal lines indicate the total performance than can be achieved with all features. The interaction surplus is highlighted green, the main effect dependencies purple.} \label{}
\end{figure}

In Section \ref{section: experiments}, we found that the variable \emph{citric}
 acidity has a large standalone contribution, but is largely redundant with the remaining features. Furthermore, we found that main effect dependencies between \emph{residual sugar} and the remaining variables have a positive impact on the performance. In this section, we leverage pairwise DIP decompositions to better understand their relationship with specific individual variables.\\
First, we have a closer look at the role of \textit{citric acidity}. We use DIP to understand which specific variables the feature shares information with (Figure \ref{fig:pairwise-wine-quality-citric-acidity}). The decomposition reveals that \textit{citric acidity} has a large negative contribution of main effect dependencies when combined with \textit{volatile acidity}, \textit{density}, and \textit{chlorides}, indicating that the variables have similar roles for the target.\\

Moreover, we are interested in the role of \textit{residual sugar} for the quality of a wine. We recall that the main effect dependencies between the variable and the remaining variables had a positive impact on the joint performance (and the LOCO score, Figure \ref{fig:experiments-results}).  %
Using pairwise DIP decompositions (Figure \ref{fig:pairwise-wine-quality-sugar}) we find a large positive effect of the main effect dependencies for the pairing with \textit{density}, \textit{chlorides}, and \textit{volatile acidity}. In other words, the DIP decompositions indicate that in these three pairings the variables have opposing relationships with target that are only revealed when analyzed jointly.
\footnote{We note that in the pairwise DIP, interactions play a more prominent role. This was not the case in the DIP decomposition of the LOCO scores. The reason is that while we previously analyzed the role of residual sugar when added to the remaining variables, we now analyze the role of the variable when added to just one other feature. In this bivariate setting, interactions are necessary to explain what previously could be explained with other variables.}
As follows, we focus on analyzing the relationship with \textit{density} using exploratory data analysis.\\

\subsubsection{Residual Sugar and Density Have Opposing Effects that Cancel Out}\label{appendix:relation-density-sugar-quality}

As follows, we have a closer look at the relationship between density, residual sugar, and wine quality. More specifically, we compare a pairwise histogram plot between a feature and target with the plot when additionally conditioning on the respective other feature.\\
While quality decreases with density (correlation coefficient $-0.30$, Figure \ref{fig:hist:density-quality}), no clear trend can be observed when looking at the pairwise relationship between sugar and quality (correlation coefficient $-0.03$, Figure \ref{fig:hist:sugar-quality}). When conditioning on sugar, the negative relationship between density and quality becomes more pronounced (mean correlation over all bins $-0.36$, Figure \ref{fig:hist:density-quality-by-sugar}). A positive relationship between sugar and quality becomes visible when conditioning on density (mean correlation over all bins $0.17$, Figure \ref{fig:hist:sugar-quality-by-density}). This suggests that sugar and density have opposing effects on the target but are positively correlated (coefficient $0.55$), such that their effects (partially) cancel each other out unless analyzed jointly.\\

\begin{figure}[h]
    \centering
    \begin{subfigure}{0.19\linewidth}
        \centering
        \includegraphics[width=0.99\linewidth]{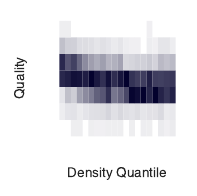}
        \caption{} \label{fig:hist:density-quality}
    \end{subfigure}
    \hfill
    \begin{subfigure}{0.79\linewidth}
        \centering
        \includegraphics[width=0.99\linewidth]{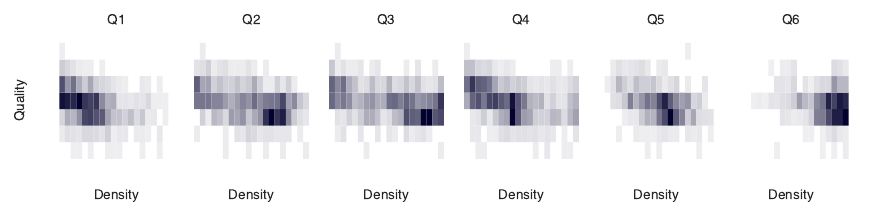}
        \caption{}\label{fig:hist:density-quality-by-sugar}
    \end{subfigure}
    \\
    \begin{subfigure}{0.19\linewidth}
        \centering
        \includegraphics[width=0.99\linewidth]{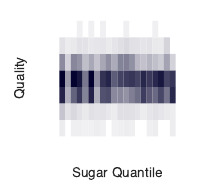}
        \caption{} \label{fig:hist:sugar-quality}
    \end{subfigure}
    \hfill
    \begin{subfigure}{0.79\linewidth}
        \centering
        \includegraphics[width=0.99\linewidth]{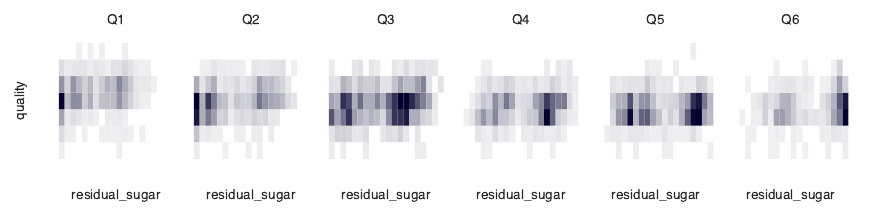}
        \caption{}\label{fig:hist:sugar-quality-by-density}
    \end{subfigure}
    \caption{Histogram plots showing the pairwise relationship of density and quality (top row), and sugar and quality (bottom row). To allow a visualization using a heatmap, we first encode the two features as ordinal variables with twenty levels. All levels have the same probability mass and correspond to equal-size quantile ranges. The left plots (Figure \ref{fig:hist:density-quality} and Figure \ref{fig:hist:sugar-quality}) show the pairwise relationship between each of the features and the target. The right plots (Figure \ref{fig:hist:density-quality-by-sugar} and \ref{fig:hist:sugar-quality-by-density}) show the relationship when conditioning on the respective partner. More specifically, to visualize the conditional density using a histogram plot, we discretize the conditioning variable into six equally sized bins (Q1 to Q6), and create one histogram plot for each bin.
    In both cases, the variable's relationship with the target becomes more visible when conditioning on the cooperation partner.
    }
    \label{fig:histograms}
\end{figure}

\subsection{DIP Decomposition of SAGE on the Wine Quality and California Housing Datasets}
\label{appendix:additional-results}

\begin{figure}[h]
    \centering
    \begin{subfigure}{0.49 \linewidth}
        \includegraphics[width=0.9\linewidth]{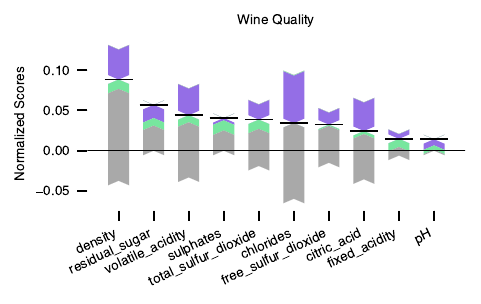}
        \caption{Wine Quality Dataset \citep{cortez2009modeling}.}
        \label{fig:sage-wine-quality}
    \end{subfigure}
    \hfill
    \begin{subfigure}{0.49 \linewidth}
        \includegraphics[width=0.9\linewidth]{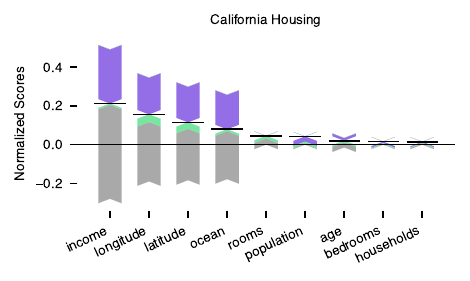}
        \caption{California Housing Dataset \citep{geron2022hands}.}
        \label{fig:sage-housing}    
    \end{subfigure}
    
    \caption{DIP Decompositions of SAGE values. For each surplus evaluation}\label{fig:sage-results}
\end{figure}

The DIP decomposition cannot only be applied to LOCO, but more generally to any explanation technique that is based on comparing the predictive power for different sets of features ($v(S \cup T) - v(S)$). One prominent example are so-called SAGE values.\\
As follows, we decompose SAGE values using DIP. For their computation we sampled $100$ orderings of the features, which each define which feature is added to which coalition of other features. Then, the SAGE value for a feature $j$ is the mean surplus $v(C \cup j) - v(C)$ achieved over the coalitions $C$ implied by the orderings. Each surplus $v(C \cup j) - v(C)$ can be decomposed into a standalone contribution and the cooperative impact. The final SAGE value is the standalone contribution plus the average cooperative impact (details in Appendix \ref{app: shapley effects}).\\
The results are presented in Figure \ref{fig:sage-results}. We apply DIP to the cooperative impact for each coalition, and present the respective average scores for \textcolor{interaction-green}{interaction surplus} and \textcolor{dependence-purple}{main effect dependencies}.\\
First, we have a look at the wine dataset. The feature \emph{citric acidity} again receives a relatively small SAGE score. The DIP decomposition reveals that the feature is relevant standalone, but receives a small score due to its redundancy with the remaining features. The feature \textit{residual sugar} has a high SAGE score. The DIP decomposition reveals that it has little standalone contribution, but contributes via interactions and positive main effect dependencies.\\
In the California housing dataset, we again observe that \emph{longitude} and \emph{latitude} are, to a large degree, important due to interactions. The feature \emph{ocean proximity} has a large standalone contribution, but receives a relatively small SAGE score due to its redundancy with the remaining features.

\vfill

\end{document}